\documentclass{article}

% if you need to pass options to natbib, use, e.g.:
% \PassOptionsToPackage{numbers, compress}{natbib}
% before loading nips_2018

% ready for submission
\usepackage[preprint, nonatbib]{nips_2018}

% to compile a preprint version, e.g., for submission to arXiv, add
% add the [preprint] option:
% \usepackage[preprint]{nips_2018}

% to compile a camera-ready version, add the [final] option, e.g.:
% \usepackage[final]{nips_2018}

% to avoid loading the natbib package, add option nonatbib:
%\usepackage[nonatbib]{nips_2018}

\usepackage[utf8]{inputenc} % allow utf-8 input
\usepackage[T1]{fontenc}    % use 8-bit T1 fonts
\usepackage{hyperref}       % hyperlinks
\usepackage{url}            % simple URL typesetting
\usepackage{booktabs}       % professional-quality tables
\usepackage{amsfonts}       % blackboard math symbols
\usepackage{nicefrac}       % compact symbols for 1/2, etc.
\usepackage{microtype}      % microtypography
%Personal
\usepackage{color}
\usepackage{subfig}
\usepackage{bbm}
\usepackage{graphicx}
\usepackage{amsmath}
\usepackage{placeins} % FloatBarrier
\usepackage{amsthm}
\usepackage{amssymb}
\usepackage{placeins}
\usepackage{float}
\usepackage{enumitem}

% References
\bibliographystyle{plain}

%shortcuts
\graphicspath{{figures/}}
% commands
\makeatletter %@issue
\newcommand\footnoteref[1]{\protected@xdef\@thefnmark{\ref{#1}}\@footnotemark}
\makeatother

% Thm and Lemma commands

% KF: Math symbol commands 
% ----
\newcommand{\cX}{\mathcal{X}}
\newcommand{\cY}{\mathcal{Y}}

\newcommand{\xa}{{\mathbf x}^a}
% ----
\usepackage{graphicx} % \scalebox
\usepackage{environ}
\NewEnviron{small_eqn}{%
    \begin{equation}
    \scalebox{0.9}{$\BODY$}
    \end{equation}
    }

\theoremstyle{plain}
\newtheorem{thm}{\protect\theoremname}
\theoremstyle{plain}
\newtheorem{lem}[thm]{\protect\lemmaname}
\theoremstyle{remark}

\makeatother

\providecommand{\lemmaname}{Lemma}
\providecommand{\remarkname}{Remark}
\providecommand{\theoremname}{Theorem}

\title{Variational Learning on Aggregate Outputs\\ with Gaussian Processes}

\author{
Ho Chung Leon Law\thanks{Department of Statistics, Oxford, UK. <ho.law@stats.ox.ac.uk, dino.sejdinovic@stats.ox.ac.uk>} \\
University of Oxford \\
%\texttt{ho.law@stats.ox.ac.uk}\\
\And
Dino Sejdinovic\footnotemark[1]\\
University of Oxford \\
%\texttt{dino.sejdinovic@stats.ox.ac.uk} \\
\And
Ewan Cameron\thanks{Big Data Institute, Oxford, UK. <dr.ewan.cameron@gmail.com, timcdlucas@gmail.com, katherine.battle@bdi.ox.ac.uk>} \\
University of Oxford \\
%\texttt{dr.ewan.cameron@gmail.com} \\
\And
Tim CD Lucas\footnotemark[2] \\
University of Oxford \\
%\texttt{timcdlucas@gmail.com} \\
\And
Seth Flaxman\thanks{Department of Mathematics and Data Science Institute, London, UK. <s.flaxman@imperial.ac.uk>} \\
Imperial College London \\
%\texttt{s.flaxman@imperial.ac.uk} \\
\And
Katherine Battle\footnotemark[2] \\
University Of Oxford \\
%\texttt{katherine.battle@bdi.ox.ac.uk}\\
\And
Kenji Fukumizu\thanks{Tachikawa, Japan. <fukumizu@ism.ac.jp>} \\
Institute of Statistical Mathematics \\
%\texttt{fukumizu@ism.ac.jp} \\
}

\begin{document}
% \nipsfinalcopy is no longer used

\maketitle

\begin{abstract}
While a typical supervised learning framework assumes that the inputs and the outputs are measured at the same levels of granularity, many applications, including global mapping of disease, only have access to outputs at a much coarser level than that of the inputs. Aggregation of outputs makes generalization to new inputs much more difficult. We consider an approach to this problem based on variational learning with a model of output aggregation and Gaussian processes, where aggregation leads to intractability of the standard evidence lower bounds. We propose new bounds and tractable approximations, leading to improved prediction accuracy and scalability to large datasets, while explicitly taking uncertainty into account. We develop a framework which extends to several types of likelihoods, including the Poisson model for aggregated count data. We apply our framework to a challenging and important problem, the fine-scale spatial modelling of malaria incidence, with over $1$ million observations.
\end{abstract}

\section{Introduction}
A typical supervised learning setup assumes existence of a set of input-output examples $\{(x_\ell,y_\ell)\}_\ell$ from which a functional relationship or a conditional probabilistic model of outputs given inputs can be learned. A prototypical use-case is the situation where obtaining outputs $y_\star$ for new, previously unseen, inputs $x_\star$ is costly, i.e., labelling is expensive and requires human intervention, but measurements of inputs are cheap and automated. Similarly, in many applications, due to a much greater cost in acquiring labels, they are only available at a much coarser resolution than the level at which the inputs are available and at which we wish to make predictions. This is the problem of \emph{weakly supervised} learning on aggregate outputs \cite{kueck2005learning, musicant2007supervised}, which has been studied in the literature in a variety of forms, with classification and regression notably being developed separately and without any unified treatment which can allow more flexible observation models. In this contribution, we consider a framework of observation models of aggregated outputs given bagged inputs, which reside in exponential families. While we develop a more general treatment, the main focus in the paper is on the Poisson likelihood for count data, which is motivated by the applications in spatial statistics. In particular, we consider the important problem of fine-scale mapping of diseases. High resolution maps of infectious disease risk can offer a powerful tool for developing National Strategic Plans, allowing accurate stratification of intervention types to areas of greatest impact \cite{gething2016mapping}. In low resource settings these maps must be constructed through probabilistic models linking the limited observational data to a suite of spatial covariates (often from remote-sensing images) describing social, economic, and environmental factors thought to influence exposure to the relevant infectious pathways.  In this paper, we apply our method to 
the incidence of clinical malaria cases. Point incidence data of malaria is typically available at a high temporal frequency (weekly or monthly), but lacks spatial precision, being aggregated by administrative district or by health facility catchment.  The challenge for risk modelling is to produce fine-scale predictions from these coarse incidence data, leveraging the remote-sensing covariates and appropriate regularity conditions to ensure a well-behaved problem. 

Methodologically, the Poisson distribution is a popular choice for modelling count data.  In the mapping setting, the intensity of the Poisson distribution is modelled as a function of spatial and other covariates.  We use Gaussian processes (GPs) as a flexible model for the intensity. GPs are a widely used approach in spatial modelling but also one of the pillars of Bayesian machine learning, enabling predictive models which explicitly quantify their uncertainty.
Recently, we have seen many advances in variational GP posterior approximations, allowing them to couple with more complex observation likelihoods (e.g. binary or Poisson data \cite{Nickisch2008,lloyd2015variational}) as well as a number of effective scalable GP approaches \cite{Quinonero2005,Titsias2009,hensman2013gaussian,hensman2015scalable}, extending the applicability of GPs to dataset sizes previously deemed prohibitive.\\ \\
\textbf{Contribution} Our contributions can be summarised as follows. A general framework is developed for \emph{aggregated observation models} using exponential families and Gaussian processes.  This is novel, as previous work on aggregation or bag models focuses on specific types of output models such as binary classification. Tractable and scalable variational inference methods are proposed for several instances of the aggregated observation models, making use of  novel lower bounds on the model evidence. In experiments, it is demonstrated that the proposed methods can scale to dataset sizes of more than $1$ million observations. We thoroughly investigate an application of the developed methodology to disease mapping from coarse measurements, where the observation model is Poisson, giving encouraging results. Uncertainty quantification, which is explicit in our models, is essential for this application.\\ 
\\
\textbf{Related Work}
The framework of learning from aggregate data was believed to have been first introduced in \cite{musicant2007supervised}, which considers the two regimes of classification and regression. However, while the task of classification of individuals from aggregate data (also known as \emph{learning from label proportions}) has been explored widely in the literature \cite{quadrianto2009estimating, patrini2014almost,kotzias2015group,melnikov2016learning,yu2013propto,yu2014learning, kueck2005learning}, there has been little literature on the analogous regression regime in the machine learning community. Perhaps the closest literature available is \cite{kotzias2015group}, who considers a general framework for learning from aggregate data, but also only considers the classification case for experiments. In this work, we will appropriately adjust the framework in \cite{kotzias2015group} and take this to be our baseline. A related problem arises in the spatial statistics community under the name of `down-scaling', `fine-scale modelling' or `spatial disaggregation' \cite{keil2013downscaling,Howitt2003}, in the analysis of disease mapping, agricultural data, and species distribution modelling, with a variety of proposed methodologies  (cf.~\cite{Xavier2018} and references therein), including kriging \cite{Goovaerts2010}. However, to the best of our knowledge, approaches making use of recent advances in scalable variational inference for GPs are not considered. \\ \\
Another closely related topic is \emph{multiple instance learning} (MIL), concerned with classification with max-aggregation over labels in a bag, i.e. a bag is positively labeled if at least one individual is positive, and it is otherwise negatively labelled. While the task in MIL is typically to predict labels of new unobserved \emph{bags}, \cite{haussmann2017variational} demonstrates that individual labels of a GP classifier can also be inferred in MIL setting with variational inference. Our work parallels that approach, considering bag observation models in exponential families and deriving new approximation bounds for some common generalized linear models. In deriving these bounds, we have taken an approach similar to \cite{lloyd2015variational}, who considers the problem of Gaussian process-modulated Poisson process estimation using variational inference. However, our problem is made more complicated by the aggregation of labels. 
Other related research topics include distribution regression and set regression, as in \cite{szabo2016learning,law2017testing, law2018bayesian} and \cite{zaheer2017deep}. In these regression problems, while the input data for learning is the same as the current setup, the goal is to learn a function at the bag level, rather than the individual level, the application of these methods in our setting, naively treating single individuals as ``distributions'', may lead to suboptimal performance. An overview of some other approaches for classification using bags of instances is given in \cite{Cheplygina2015}.
\vspace{-1em}
\section{Bag observation model: aggregation in mean parameters}
Suppose we have a statistical model $p(y|\eta)$ for output $y\in\cY$, with parameter $\eta$ given by a function of input $x\in\cX$, i.e., $\eta=\eta(x)$. 
Although one can formulate $p(y|\eta)$ in an arbitrary fashion, practitioners often only focus on interpretable simple models, hence we restrict our attention to $p(y|\eta)$ arising from exponential families.  We assume that $\eta$ is the mean parameter of the exponential family.  

Assume that we have a fixed set of points $x_i^a\in\cX$ such that $\xa=\{x^a_1,\ldots,x^a_{N_a}\}$ is a {\it bag} of points with $N_a$ {\it individuals}, and we wish to estimate the regression value $\eta(x_i^a)$ for each individual. However, instead of the typical setup where we have a paired sample $\{(x_\ell,y_\ell)\}_\ell$ of individuals and their outputs to use as a training set, we observe only \emph{aggregate outputs} $y^a$ for each of the bags. Hence, our training data is of the form
\begin{equation}\label{eq:bagged_data}
(\{x^1_i\}_{i=1}^{N_1}, y^1), \dots (\{x^n_i\}_{i=1}^{N_n}, y^n),
\end{equation}
and the goal is to estimate parameters $\eta(x_i^a)$ corresponding to individuals.  To relate the aggregate $y^a$ and the bag $\xa=(x_i^a)_{i=1}^{N_a}$, we use the following {\em bag observation model}: 
\begin{equation}
y^a | \xa \sim p(y|\eta^a), \qquad \eta^a  = \sum_{i=1}^{N_a} w_i^a \eta(x^a_i),
\end{equation}
where $w_i^a$ is an optional fixed non-negative weight used to adjust the scales (see Section \ref{sec:method} for an example). Note that the aggregation in the bag observation model is on the mean parameters for individuals, not necessarily on the individual responses $y_i^a$. This implies that each individual contributes to the mean bag response and that the observation model for bags belongs to the same parametric form as the one for individuals.  For tractable and scalable estimation, we will use variational methods, as the aggregated observation model leads to intractable posteriors.  We consider the Poisson, normal, and exponential distributions, but devote a special focus to the Poisson model in this paper, and refer readers to Appendix \ref{app:general} for other cases and experimental results for the Normal model in Appendix \ref{app:normal_exp}.

It is also worth noting that we place no restrictions on the collection of the individuals, with the bagging process possibly dependent on covariates $x^a_i$ or any unseen factors. The bags can also be of different sizes, with potentially the same individuals appearing in multiple bags. After we obtain our individual model $\eta(x)$, we can use it for prediction of in-bag individuals, as well as out-of-bag individuals. 
\vspace{-1em}
\section{Poisson bag model: Modelling aggregate counts}
\label{sec:method}

The Poisson distribution $p(y|\lambda)=\lambda^y e^{-\lambda}/(y!)$ is considered for count observations, and this paper discusses the Poisson regression with intensity $\lambda(x_i^a)$ multiplied by a `population' $p_i^a$, which is a constant assumed to be known for each individual (or `sub-bag') in the bag. The population for a bag $a$ is given by $p^a=\sum_i p_i^a$.  An observed bag count $y^a$ is assumed to follow 
\[
y^a|\xa \sim {\rm Poisson}(p^a\lambda^a), \quad\lambda^a := \sum_{i=1}^{N_a} \frac{p^a_i}{p^a}  \lambda(x_i^a).
\]
Note that, by introducing unobserved counts $y_i^a\sim {\rm Poisson}(y_i^a|p_i^a\lambda(x_i^a))$, the bag observation $y^a$ has the same distribution as $\sum_{i=1}^{N_a} y_i^a$ since the Poisson distribution is closed under convolutions.  If a bag and its individuals correspond to an area and its partition in geostatistical applications, as in the malaria example in Section \ref{sec:malaria_exp}, the population in the above bag model can be regarded as the population of an area or a sub-area. With this formulation, the goal is to estimate the basic intensity function $\lambda(x)$ from the aggregated observations \eqref{eq:bagged_data}.  Assuming independence given $\{\xa\}_a$, the negative log-likelihood (NLL) $\ell_0$ across bags is 
\begin{equation}
\label{eqn:poiss_nll}
-\log [\Pi_{a=1}^n p(y^a | \mathbf{x}^a)] \overset{c}{=} \sum_{a=1}^n p^a \lambda^a - y^a \log (p^a \lambda^a) \overset{c}{=} \sum_{a=1}^n \left[\sum_{i=1}^{N_a} p^a_i \lambda(x^a_i) - y^a \log \left( \sum_{i=1}^{N_a} p^a_i\lambda(x^a_i) \right)\right],
\end{equation} 
where $\overset{c}{=}$ denotes an equality up to additive constant. 
During training, this term will pass information from the bag level observations $\{y^a\}$ to the individual basic intensity $\lambda(x^a_i)$. It is noted that once we have trained an appropriate model for $\lambda(x^a_i)$, we will be able to make individual level predictions, and also bag level predictions if desired. We will consider baselines with (\ref{eqn:poiss_nll}) using penalized likelihoods inspired by manifold regularization in semi-supervised learning \cite{belkin2006manifold} -- presented in Appendix \ref{sec:baselines}. In the next section, we propose a model for $\lambda$ based on GPs.
\subsection{VBAgg-Poisson: Gaussian processes for aggregate counts}
Suppose now we model $f$ as a Gaussian process (GP), then we have:
\begin{equation}
y^a | \xa \sim {\rm Poisson}\left(\sum_{i=1}^{N_a}p^a_i\lambda^a_i\right), \qquad \lambda^a_i = \Psi(f(x^a_i)), \qquad f \sim GP(\mu, k)
\end{equation}
where $\mu$ and $k$ are some appropriate mean function and covariance kernel $k(x, y)$.
(For implementation, we consider a constant mean function.)
Since the intensity is always non-negative, in all models, we will need to use a transformation $\lambda(x) = \Psi(f(x))$, where $\Psi$ is a non-negative valued function.   
We will consider cases $\Psi(f)=f^2$ and $\Psi(f)=e^f$. A discussion of various choices of this link function in the context of Poisson intensities modulated by GPs is given in \cite{lloyd2015variational}. Modelling $f$ with a GP allows us to propagate uncertainty on the predictions to $\lambda^a_i$, which is especially important in this weakly supervised problem setting, where we do not directly observe any individual output  $y^a_i$. Since the total number of individuals in our target application of disease mapping is typically in the millions (see Section \ref{sec:malaria_exp}), we will approximate the posterior over $\lambda^a_i:=\lambda(x^a_i)$ using variational inference, with details found in Appendix \ref{app:poisson_details}.

For scalability of the GP method, as in previous literature \cite{haussmann2017variational,lloyd2015variational}, we use a set of inducing points $\{u_\ell\}_{\ell=1}^m$, which are given by the function evaluations of the Gaussian process $f$ at landmark points $W=\{w_1,\ldots,w_m\}$; i.e., $u_\ell = f(w_\ell)$. The distribution $p(u|W)$ is thus given by 
\begin{equation}
u\sim N(\mu_W,K_{WW}),  \qquad \mu_W=(\mu(w_\ell))_\ell, \quad K_{WW} = (k(w_s,w_t))_{s,t}. 
\end{equation}
The joint likelihood is given by:
\begin{equation}\label{eq:likelihood}
p(y,f,u|X,W,\Theta) = \prod_{a=1}^n \prod_{i=1}^{N_a} {\rm Poisson}(y^a| p^a\lambda^a) p(f| u)p(u|W), \text{ with } f|u\sim  GP(\tilde{\mu}_u,\tilde{K}),
\end{equation}

\begin{equation}
\label{eqn:f|u}
	\tilde{\mu}(z)= \mu_z + {\bf k}_{zW}K_{WW}^{-1} (u - \mu_W), \quad \tilde{K}(z,z')=k(z,z')-{\bf k}_{zW}K_{WW}^{-1} {\bf k}_{Wz'}
\end{equation}
where ${\bf k}_{zW}=(k(z,w_1),\ldots,k(z,w_\ell))^T$, with $\mu_W$, $\mu_z$ denoting their respective evaluations of the mean function $\mu$ and $\Theta$ being parameters of the mean and kernel functions of the GP. Proceeding similarly to \cite{lloyd2015variational}, which discusses (non-bag) Poisson regression with GP, we obtain a lower bound of the marginal log-likelihood $\log p(y|\Theta)$:
\begin{align}
\log p(y|\Theta) & = \log \int \int p(y,f,u|X,W,\Theta) dfdu \nonumber \\
& \geq  \int \int \log\Bigl\{p(y|f,\Theta)\frac{p(u|W)}{q(u)}\Bigr\}p(f|u,\Theta) q(u)dfdu \quad \text{(Jensen's inequality)}\nonumber \\
& = \sum_a \int \int \Bigl\{ y^a \log\Bigl(\sum_{i=1}^{N_a} p^a_i \Psi(f(x^a_i)\Bigr) - \Bigl(\sum_{i=1}^{N_a} p^a_i\Psi(f(x^a_i))\Bigr) \Bigr\}  
p(f| u) q(u)dfdu \nonumber \\
& \qquad - \sum_a \log(y^a!) - KL(q(u)||p(u|W))=:\mathcal L(q,\Theta),
\label{eq:VB_obj}
\end{align}
where $q(u)$ is a variational distribution to be optimized. 
The general solution to the maximization over $q$ of the evidence lower bound $\mathcal L(q,\Theta)$ above is given by the posterior of the inducing points $p(u|y)$, which is intractable.  We introduce a restriction to the class of $q(u)$ to approximate the posterior $p(u|y)$.  Suppose that the variational distribution $q(u)$ is Gaussian, $q(u) = N(\eta_u, \Sigma_u)$.  We then need to maximize the lower bound $\mathcal L(q,\Theta)$ over the variational parameters $\eta_u$ and $\Sigma_u$. 

The resulting $q(u)$ gives an approximation to the posterior $p(u|y)$ which also leads to a Gaussian approximation $q(f) = \int p(f|u)q(u)du$ to the posterior $p(f|y)$, which we finally then transform through $\Psi$ to obtain the desired approximate posterior on each $\lambda(x_a^i)$ (which is either log-normal or non-central $\chi^2$ depending on the form of $\Psi$). The approximate posterior on $\lambda$ will then allow us to make predictions for individuals while, crucially, taking into account the uncertainties in $f$ (note that even the posterior predictive mean of $\lambda$ will depend on the predictive variance in $f$ due to the nonlinearity $\Psi$). We also want to emphasis the use of inducing variables is essential for scalability in our model: we cannot directly obtain approximations to the posterior of $\lambda(x^a_i)$ for all individuals, since this is often large in our problem setting (Section \ref{sec:malaria_exp}).

As the $p(u|W)$ and $q(u)$ are both Gaussian, the last term (KL-divergence) of (\ref{eq:VB_obj}) can be computed explicitly with exact form found in Appendix \ref{app:kl_term}.
To consider the first two terms, let  $q^a(v^a)$ be the marginal normal distribution of $v^a=(f(x^a_1),\ldots,f(x^a_{N_a}))$, where $f$ follows the variational posterior  $q(f)$.
The distribution of $v^a$ is then $N(m^a,S^a)$, using \eqref{eqn:f|u} :  
\begin{equation}
{\small
\label{eqn:meanCov}
m^a=\mu_{\xa} + K_{\xa W}K_{WW}^{-1} (\eta_u-\mu_W), \;S^a =K_{\xa,\xa}-K_{\xa W}\left(K_{WW}^{-1}-K_{WW}^{-1} \Sigma_u K_{WW}^{-1}\right) K_{W\xa}
}
\end{equation}

In the first term of (\ref{eq:VB_obj}), each summand is
of the form 
\begin{equation}\label{eq:lse}
y^a  \int \log\Bigl(\sum_{i=1}^{N_a} p^a_i \Psi\left({v_i^a}\right)\Bigr) q^a(v^a)dv^a -  \sum_{i=1}^{N_a}p^a_i\int  \Psi\left({v_i^a}\right) q^a(v^a)dv^a,
\end{equation}

in which the second term is tractable for both of $\Psi(f)=f^2$ and $\Psi(f)=e^f$.  The integral of the first term, however with $q^a$ Gaussian is not tractable.
To solve this, we take different approaches for $\Psi(f)=f^2$ and $\Psi(f)=e^f$; for the former, approximation by Taylor expansion is applied, while for the latter, further lower bound is taken. 

First consider the case $\Psi(f)=f^2$, and rewrite the first term of (\ref{eq:VB_obj}) as:
\begin{equation*}
y^a  \mathbb{E}\log\left\Vert V^a\right\Vert ^{2} \quad, \text{where}\; V^a \sim N( \tilde{m}^a, \tilde{S}^a),
\end{equation*}
with $P^a = diag\left(p^a_1,\dots, p^a_{N_a}\right), \tilde{m}^a = {P^a}^{1/2} m^a$ and $\tilde{S}^a = {P^a}^{1/2} S^a{P^a}^{1/2}$. By a Taylor series approximation for $\mathbb{E}\log\left\Vert V^a\right\Vert ^{2}$ (similar to \cite{teh2007collapsed}) around $\mathbb{E}\left\Vert V^a\right\Vert ^{2}=\left\Vert \tilde{m}^a \right\Vert ^{2}+tr\tilde{S}^a$, we obtain 
\begin{multline}
\int \log\Bigl(\sum_{i=1}^{N_a} p^a_i (v_i^a)^2 \Bigr) q^a(v^a)dv^a  \\
\approx  \log\left(m^{a\top}P^a m^a +tr(S^a P^a)\right)-\frac{2m^{a\top}P^aS^aP^am^a+tr\Bigl((S^a P^a)^2\Bigr)}{\left(m^{a\top}P^a m^a +tr(S^a P^a)\right)^{2}}=:\zeta^a.
\end{multline}
with details are in Appendix \ref{sec:taylor}. An alternative approach which we use for the case $\Psi(f)=e^f$ is to take a further lower bound, which is applicable to a general class of $\Psi$ (we provide further details for the analogous approach for $\Psi(v)=v^2$ in Appendix \ref{sec:lb_square}). We use the following Lemma (proof found in Appendix \ref{app:lemma}):
\begin{lem}
\label{lem:logsum}
Let $v=[v_{1},\ldots,v_{N}]^{\top}$ be a random vector with probability density $q(v)$ with marginal densities $q_i(v)$, and let $w_{i}\geq0$, $i=1,\ldots,N$. Then, for any non-negative valued function $\Psi(v)$,
\[
\int\log\bigl(\sum_{i=1}^{N}w_{i}\Psi(v_{i})\bigr)q(v)dv\geq\log\Bigl(\sum_{i=1}^{N}w_{i}e^{\xi_{i}}\Bigr), \quad{where}\quad
\xi_{i}:=\int\log\Psi(v_{i})q_{i}(v_{i})dv_{i}.
\]
\end{lem}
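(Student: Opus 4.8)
The plan is to recognize the left-hand side as the expectation of a convex function of the random vector and then invoke Jensen's inequality. Writing $g_i(v) := \log \Psi(v_i)$ (with the convention $\log 0 = -\infty$), the integrand becomes $\log\bigl(\sum_{i=1}^N w_i e^{g_i(v)}\bigr) = F\bigl(g_1(v),\ldots,g_N(v)\bigr)$, where $F(t_1,\ldots,t_N) := \log\bigl(\sum_{i=1}^N w_i e^{t_i}\bigr)$ is the weighted log-sum-exp function on $\mathbb{R}^N$. The whole lemma then reduces to convexity of $F$ plus a bookkeeping step that replaces the joint expectations by marginal ones.

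The key structural fact I would establish first is that $F$ is jointly convex in $(t_1,\ldots,t_N)$. Assuming at least one $w_i>0$ (otherwise both sides equal $-\infty$ and there is nothing to prove), this follows because, on the coordinates with $w_i>0$, $F$ is the standard log-sum-exp function composed with the affine map $t_i \mapsto t_i + \log w_i$; log-sum-exp is convex (its Hessian $\mathrm{diag}(s) - s s^\top$, with $s$ the softmax vector, is PSD by Cauchy--Schwarz), and convexity is preserved under affine precomposition. Zero-weight coordinates simply drop out and do not affect convexity.

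With convexity in hand, I would apply Jensen's inequality to the random vector $\bigl(g_1(v),\ldots,g_N(v)\bigr)$ distributed according to $q$, giving
\[
\int F\bigl(g_1(v),\ldots,g_N(v)\bigr)\, q(v)\,dv \;\geq\; F\bigl(\mathbb{E}_q[g_1(v)],\ldots,\mathbb{E}_q[g_N(v)]\bigr).
\]
The final step identifies the arguments on the right. Because $g_i(v)=\log\Psi(v_i)$ depends on $v$ only through the single coordinate $v_i$, its expectation under $q$ reduces to an integral against the marginal, $\mathbb{E}_q[g_i(v)] = \int \log\Psi(v_i)\, q_i(v_i)\,dv_i = \xi_i$, which is precisely where the hypothesis on marginals is used. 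Substituting yields $F(\xi_1,\ldots,\xi_N) = \log\bigl(\sum_i w_i e^{\xi_i}\bigr)$, the claimed bound.

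The main obstacle is really just the convexity of the weighted log-sum-exp together with care over degenerate cases: where $\Psi(v_i)=0$ on a set of positive $q_i$-measure, $\xi_i=-\infty$ and the corresponding term $w_i e^{\xi_i}=0$ drops from the right-hand side, so the inequality remains valid under the convention $e^{-\infty}=0$; one should also assume each $\xi_i$ exists in $[-\infty,\infty)$, which is guaranteed when $\log\Psi(v_i)$ is $q_i$-integrable, as in the Gaussian/log-normal cases of interest. Everything else is a direct application of Jensen, and crucially the right-hand side depends on $q$ only through its marginals $q_i$, which is exactly what makes the resulting bound tractable in \eqref{eq:lse}.
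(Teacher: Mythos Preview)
Your proof is correct but takes a genuinely different route from the paper. The paper introduces auxiliary simplex weights $\alpha_1,\ldots,\alpha_N$ with $\sum_i \alpha_i=1$, applies the concavity of $\log$ pointwise in $v$ to get
\[
\log\Bigl(\sum_i w_i\Psi(v_i)\Bigr)=\log\Bigl(\sum_i \alpha_i\tfrac{w_i}{\alpha_i}\Psi(v_i)\Bigr)\geq \sum_i \alpha_i\log\Bigl(\tfrac{w_i}{\alpha_i}\Psi(v_i)\Bigr),
\]
takes expectations to obtain $\sum_i \alpha_i\xi_i+\sum_i\alpha_i\log(w_i/\alpha_i)$, and then maximizes over $\alpha$ via a Lagrange multiplier, yielding $\alpha_i\propto w_i e^{\xi_i}$ and hence the stated bound. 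In contrast, you apply Jensen once to the random vector $(\log\Psi(v_1),\ldots,\log\Psi(v_N))$ using convexity of the weighted log-sum-exp map $F(t)=\log\sum_i w_i e^{t_i}$. The two arguments are dual: the paper's $\alpha$-optimization is precisely the variational (Fenchel) representation of log-sum-exp, so both collapse to the same inequality. Your route is shorter and avoids the auxiliary optimization; the paper's route has the minor advantage of exhibiting an entire family of valid lower bounds indexed by $\alpha$, tightest at the softmax choice, which is a familiar pattern in variational inference. Your handling of the degenerate cases ($w_i=0$, $\xi_i=-\infty$) is also more explicit than the paper's.
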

Hence we obtain that 
\begin{equation}
\label{eq:lb_lse} 
\int \log\bigl(\sum_{i=1}^{N_a} p^a_i e^{v^a_i} \bigr) q^a(v^a)dv^a
\geq  \log\Bigl(\sum_{i=1}^{N_a} p^a_i e^{m^a_i}\Bigr),
\end{equation}
 
Using the above two approximation schemes, our objective (up to constant terms) can be formulated as: 
1) $\Psi(v)=v^2$
\begin{equation}
\mathcal{L}_1^s(\Theta,\eta_u,\Sigma_u,W) := \sum_{a=1}^n y^a \zeta^a
- \sum_{a=1}^n \sum_{i=1}^{N_a} \bigl\{(m^a_i)^2 + S^a_{ii}/2\bigr\} - KL(q(u)||p(u|W)),
\end{equation}
2) $\Psi(v)=e^v$
\begin{equation}
\mathcal{L}_1^e(\Theta,\eta_u,\Sigma_u,W) := \sum_{a=1}^n y^a \log\bigl(\sum_{i=1}^{N_a} e^{m^a_i}\bigr) - \sum_{j=1}^n \sum_{i=1}^{N_a} e^{m^a_i + S^a_{ii}/2} 
- KL(q(u)||p(u|W)).
\end{equation}
Given these objectives, we can now optimise these lower bounds with respect to variational parameters $\{\eta_u, \Sigma_u\}$, parameters $\Theta$ of the mean and kernel functions, using stochastic gradient descent (SGD) on bags. Additionally, we might also learn $W$, locations for the landmark points. In this form, we can also see that the bound for $\Psi(v)=e^v$ has the added computational advantage of not requiring the full computation of the matrix $S^a$, but only its diagonals, while for $\Psi(v)=v^2$ computation of $\zeta^a$ involves full $S^a$, which may be problematic for extremely large bag sizes.
\section{Experiments}
\label{sec:exp}
We will now demonstrate various approaches: Variational Bayes with Gaussian Process (VBAgg), a MAP estimator of Bayesian Poisson regression with explicit feature maps (Nystr\"om) and a neural network (NN) -- the latter two employing manifold regularisation with RBF kernel (unless stated otherwise). For additional baselines, we consider a constant within bag model (constant), i.e. $\hat{\lambda^a_i} = \frac{y^a}{p^a}$ and also consider creating `individual' covariates by aggregation of the covariates within a bag (bag-pixel). For details of all these approaches, see Appendix \ref{sec:baselines}. We also denote $\Psi(v) = e^v$ and $v^2$ as Exp and Sq respectively.

We implement our models in \textit{TensorFlow}\footnote{Code will be available for use.} and use SGD with Adam \cite{kingma2014adam} to optimise their respective objectives, and we split the dataset into $4$ parts, namely train, early-stop, validation and test set. Here the early-stop set is used for early stopping for the Nystr\"om, NN and bag-pixel models, while the VBAgg approach ignores this partition as it optimises the lower bound to the marginal likelihood. The validation set is used for parameter tuning of any regularisation scaling, as well as learning rate, layer size and multiple initialisations. Throughout, VBAgg and Nystr\"om have access to the same set of landmarks for fair comparison. It is also important to highlight that we perform early stopping and tuning based on \textit{bag} level performance on NLL only, as this is the only information available to us. 

For the VBAgg model, there are two approaches to tuning, one approach is to choose parameters based on NLL on the validation bag sets, another approach is to select all parameters based on the training objective $\mathcal{L}_1$, the lower bound to the marginal likelihood. We denote the latter approach VBAgg-Obj and report its toy experimental results in Appendix \ref{app:poisson_toy} for presentation purposes. In general, the results are relatively \textit{insensitive} to this choice, especially when $\Psi(v)=v^2$. To make predictions, we use the mean of our approximated posterior (provided by a log-normal and non-central $\chi^2$ distribution for Exp and Sq). As an additional evaluation, we report mean square error (MSE) and bag performance results in Appendix \ref{app:experiments}.
\begin{figure}
\centering
\includegraphics[width=\textwidth]{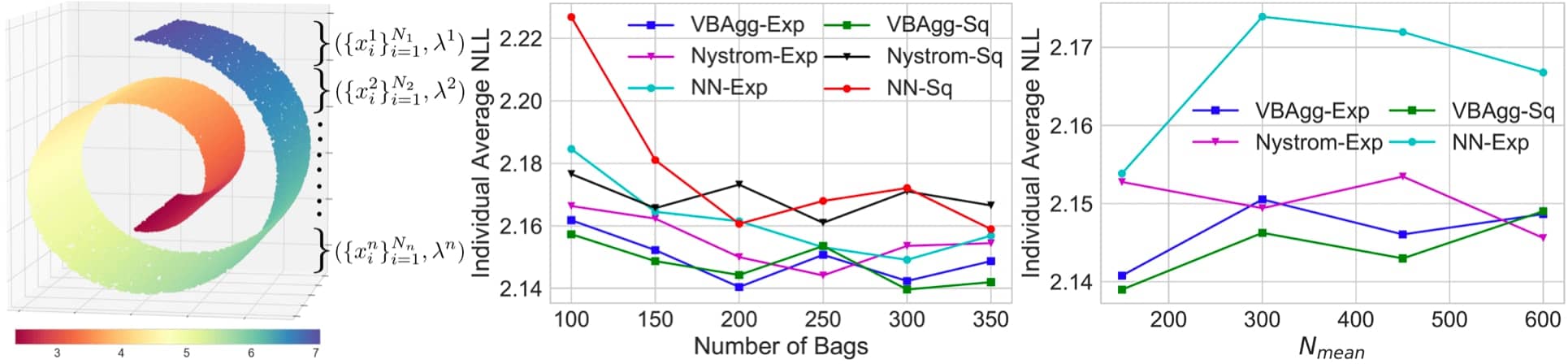} 
  \caption{\textbf{Left}: Random samples on the Swiss roll manifold. \textbf{Middle, Right}: Individual Average NLL on train set for varying number of training bags $n$ and increasing $N_{mean}$, over $5$ repetitions. Constant prediction within bag gives a NLL of $2.22$. bag-pixel model gives NLL above $2.4$ for the varying number of bags experiment.}%
    \label{fig:swiss}%
\end{figure}

\subsection{Poisson Model: Swiss Roll}
We first demonstrate our method on the swiss roll dataset\footnote{\label{fn:data}The swiss roll manifold function (for sampling) can be found on the Python \textit{scikit-learn} package.}, illustrated in Figure \ref{fig:swiss} (left). To make this an aggregate learning problem, we first construct $n$ bags with sizes drawn from a negative binomial distribution $N_a \sim NB(N_{mean}, N_{std})$, where $N_{mean}$ and $N_{std}$ represents the respective mean and standard deviation of $N_a$.
We then randomly select $\sum_{a=1}^n N_a$ points from the swiss roll manifold to be the locations, giving us a set of colored locations in $\mathbb{R}^3$. Ordering these random locations by their $z$-axis coordinate, we group them, filling up each bag in turn as we move along the $z$-axis. %Taking $p^a_i=1$.
The aim of this is to simulate that in real life the partitioning of locations into bags is often not independent of covariates. Taking the colour of each location as the underlying rate $\lambda^a_i$ at that location, we simulate $y^a_i \sim Poisson(\lambda^a_i)$, and take our observed outputs to be $y^a = \sum_{i=1}^{N_a} y^a_i \sim Poisson(\lambda^a)$, where $\lambda^a = \sum_{i=1}^{N_a} \lambda^a_i$. Our goal is then to predict the underlying individual rate parameter $\lambda^a_i$, given only bag-level observations $y^a$. To make this problem even more challenging, we embed the data manifold into $\mathbb{R}^{18}$ by rotating it with a random orthogonal matrix. For the choice of $k$ for VBAgg and Nystr\"om, we use the RBF kernel, with the bandwidth parameter learnt. For landmark locations, we use the K-means++ algorithm, so that landmark points lie evenly across the data manifold.
\vspace{-1em} \\
\paragraph{Varying number of Bags: $n$}
To see the effect of increasing number of bags available for training, we fix $N_{mean}=150$ and $N_{std}= 50$, and vary the number of bags $n$ for the training set from $100$ to $350$ with the same number of bags for early stopping and validation. Each experiment is repeated for $5$ runs, and results are shown in Figure \ref{fig:swiss} for individual NLL on the train set. Again we emphasise that the individual labels are not used in training. We see that all versions of VBAgg outperform all other models, in terms of MSE and NLL, with statistical significance confirmed by a signed rank permutation test (see Appendix \ref{app:poisson_toy}). We also observe that the bag-pixel model has poor performance, as a result of losing individual level covariate information in training by simply aggregating them.\\ \\\textbf{Varying number of individuals per bag: $N_{mean}$} \quad To study the effect of increasing bag sizes (with larger bag sizes, we expect "disaggregation" to be more difficult), we fix the number of training bags to be $600$ with early stopping and validation set to be $150$ bags, while varying the number of individuals per bag through $N_{mean}$ and $N_{std}$ in the negative binomial distribution. To keep the relative scales between $N_{mean}$ and $N_{std}$ the same, we take $N_{std} = N_{mean}/2$. The results are shown in Figure \ref{fig:swiss}, focusing on the best performing methods in the previous experiment. Here, we observe that VBAgg models again perform better than the Nystr\"om and NN models with statistical significance as reported in Appendix \ref{app:poisson_toy}, with performance stable as $N_{mean}$ increases. \\ \\
\textbf{Discussion} \quad To gain more insight into the VBAgg model, we look at the calibration of our two different Bayesian models: VBAgg-Exp and VBAgg-Square. We compute their respective posterior quantiles and observe the ratio of times the true $\lambda^a_i$ lie in these quantiles. We present these in Appendix \ref{app:poisson_toy}. The calibration plots reveal an interesting nature about using the two different approximations for using $e^v$ versus $v^2$ for $\Psi(v)$. While experiments showed that the two model perform similarly in terms of NLL, the calibration of the models is very different. While the VBAgg-Square is well calibrated in general, the VBAgg-Exp suffers from poor calibration. This is not surprising, as VBAgg-Exp uses an additional lower bound on model evidence. Thus, uncertainty estimates given by VBAgg-Exp should be treated with care.

\subsection{Malaria Incidence Prediction}
\label{sec:malaria_exp}
We now demonstrate the proposed methodology on an important real life malaria prediction problem for an endemic country from the Malaria Atlas Project database\footnote{Due to confidentiality reasons, we do not report country or plot the full map of our results.}. In this problem, we would like to predict the underlying malaria incidence rate in each $1$km by $1$km region (referred to as a pixel), while having only observed aggregated incidences of malaria $y^a$ at much larger regional levels, which are treated as bags of pixels. These bags are non-overlapping administrative units, with $N_a$ pixels per bag ranging from 13 to 6,667, with a total of 1,044,683 pixels. In total, data is available for $957$ bags\footnote{We consider $576$ bags for train, $95$ bags each for validation and early-stop, with $191$ bags for testing, with different splits across different trials, selecting them to ensure distributions of labels are similar across sets.}. Along with these pixels, we also have population estimates $p^a_i$ (per $1000$ people) for pixel $i$ in bag $a$, spatial coordinates given by $s^a_i$, as well as covariates $x^a_i \in \mathbb{R}^{18}$, collected by remote sensing. Some examples of covariates includes accessibility, distance to water, mean of land surface temperature and stable night lights. It is clear that rather than expecting malaria incidence rate to be constant throughout the entire bag (as in Figure \ref{fig:malaria}), we expect pixel incidence rate to vary, depending on social, economic and environmental factors \cite{weiss2015re}. Our goal is therefore to build models that can predict malaria incidence rates at a \textit{pixel} level.

We assume a Poisson model on each individual pixel, i.e.  $y^a \sim Poisson(\sum_i p_i^a \lambda_i^a)$, where $\lambda_i^a$ is the underlying pixel incidence rate of malaria per $1000$ people that we are interested in predicting. We consider the VBAgg, Nystr\"om and NN as prediction models and use a kernel given as a sum of an ARD (automatic relevance determination) kernel on covariates and a Mat\'ern kernel on spatial locations for the VBAgg and Nystr\"om methods, learning all kernel parameters (the kernel expression is provided in Appendix \ref{app:malaria}). We use the same kernel for manifold regularisation in the NN model. This kernel choice incorporates spatial information, while allowing feature selection amongst other covariates. For choice of landmarks, we ensure landmarks are placed evenly throughout space by using one landmark point per training bag (selected by k-means++). This is so that the uncertainty estimates we obtain are not too sensitive to the choice of landmarks.
\begin{figure}
\centering
\includegraphics[width=\linewidth]{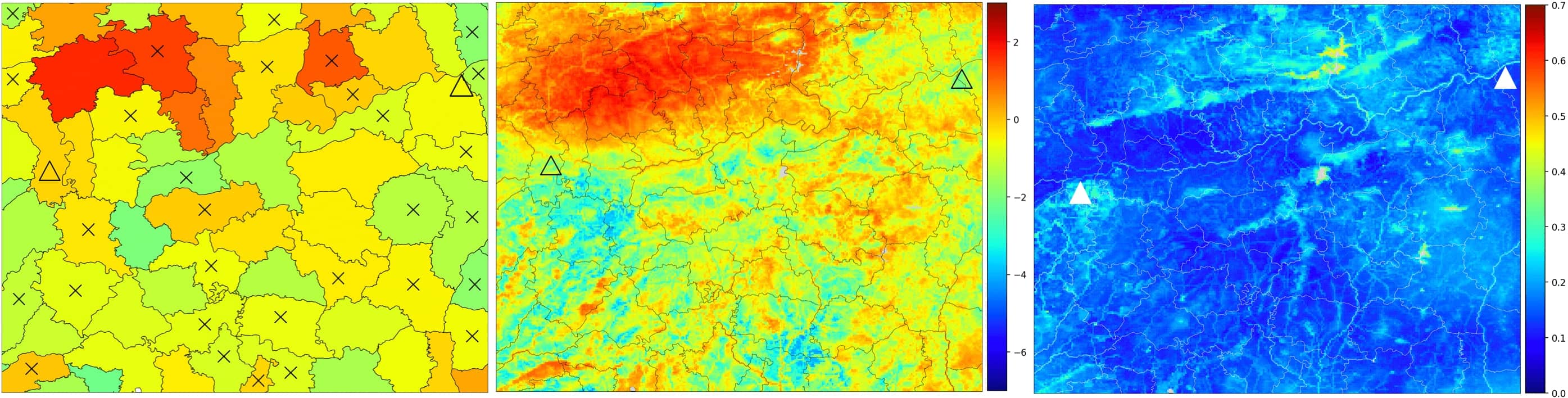}
\captionof{figure}{Triangle denotes approximate start and end of river location, crosses denotes non-train set bags. Malaria incidence rate $\lambda_i^a$ is per $1000$ people. \textbf{Left, Middle}:  $
\log(\hat{\lambda}^a_i)$, with constant model (Left), and VBAgg-Obj-Sq (tuned on $\mathcal{L}_1^s$) (Middle). \textbf{Right}: Standard deviation of the posterior $v$ in (\ref{eqn:meanCov}) with VBAgg-Obj-Sq.}
\label{fig:malaria}
\end{figure}
In this problem, no individual-level labels are available, so we report Bag NLL and MSE (on observed incidences) on the test bags in Appendix \ref{app:malaria} over $10$ different re-splits of the data. Although we can see that Nystr\"om is the best performing method, the improvement over VBAgg models is not statistically significant. On the other hand, both VBAgg and Nystr\"om models statistically significantly outperform NN, which also has some instability in its predictions, as discussed in Appendix \ref{app:malaria_preds}. However, a caution should be exercised when using the measure of performance at the bag level as a surrogate for the measure of performance at the individual level: in order to perform well at the bag level, one can simply utilise spatial coordinates and ignore other covariates, as malaria intensity appears to smoothly vary between the bags (Left of Figure \ref{fig:malaria}). However, we do not expect this to be true at the individual level. 

To further investigate this, we consider a particular region, and look at the predicted individual malaria incidence rate, with results found in Figure \ref{fig:malaria} and in Appendix \ref{app:malaria_preds} across $3$ different data splits, where the behaviours of each of these models can be observed. While Nystr\"om and VBAgg methods both provide good bag-level performance, Nystr\"om and VBAgg-Exp can sometimes provide overly-smooth spatial patterns, which does not seem to be the case for the VBAgg-Sq method (recall that VBAgg-Sq performed best in both prediction and calibration for the toy experiments). In particular, VBAgg-Sq consistently predicts higher intensity along rivers (a known factor \cite{warrel2017oxford}; indicated by triangles in Figure \ref{fig:malaria}) using only coarse aggregated intensities, demonstrating that prediction of (unobserved) pixel-level intensities is possible using fine-scale environmental covariates, especially ones known to be relevant such as covariates indicated by the Topographic Wetness Index, a measure of wetness, see Appendix \ref{app:river} for more details.

In summary, by optimising the lower bound to the marginal likelihood, the proposed variational methods are able to learn useful relations between the covariates and pixel level intensities, while avoiding the issue of overfitting to spatial coordinates. Furthermore, they also give uncertainty estimates (Figure \ref{fig:malaria}, right), which are essential for problems like these, where validation of predictions is difficult, but they may guide policy and planning.

\section{Conclusion}
Motivated by the vitally important problem of malaria, which is
the direct cause of around 187 million clinical cases \cite{bhatt2015effect} and 631,000 deaths \cite{gething2016mapping} each year in sub-Saharan Africa, we have proposed a general framework of \emph{aggregated observation models} using Gaussian processes, along with scalable variational methods for inference in those models, making them applicable to large datasets. The proposed method allows learning in situations where outputs of interest are available at a much coarser level than that of the inputs, while explicitly quantifying uncertainty of predictions. 
The recent uptake of digital health information systems offers a wealth of new data which is abstracted to the aggregate or regional levels to preserve patient anonymity. The volume of this data, as well as the availability of much more granular covariates provided by remote sensing and other geospatially tagged data sources, allows to probabilistically disaggregate outputs of interest for finer risk stratification, e.g.~assisting public health agencies to plan the delivery of disease interventions. This task demands new high-performance machine learning methods and we see those that we have developed here as an important step in this direction.

\section*{Acknowledgement}
We thank Kaspar Martens for useful discussions, and Dougal Sutherland for providing the code base in which this work was based on. HCLL is supported by the EPSRC and MRC through the OxWaSP CDT programme (EP/L016710/1). HCLL and KF are supported by JSPS KAKENHI 26280009. EC and KB are supported by OPP1152978, TL by OPP1132415 and the MAP database by OPP1106023. DS is supported in part by the ERC (FP7/617071) and by The Alan Turing Institute (EP/N510129/1). The data were provided by the Malaria Atlas Project supported by the Bill and Melinda Gates Foundation.  

% REFERENCES
\clearpage
\bibliography{references}

% APPENDIX ---------------------------------------------------------------------------------------------------------------------------------------------------------------------------------------------------
\FloatBarrier
\appendix
%Normal Model

\newpage
\section{Aggregated Exponential Family Models}
\label{app:general}
Consider an observation model of the form
\begin{equation}
p(y|\theta) = \exp\left (\frac{y\theta - c(\theta)}{\tau} \right)h(y,\tau),
\end{equation}
where response $y$ is one-dimensional, $\theta$ is a natural parameter corresponding to the statistic $y$, $\tau$ is a dispersion parameter, and $h(y,\tau)$ is base measure.  For simplicity, we will assume that natural parameters corresponding to the other parts of the sufficient statistic are fixed and folded into the base measure. Let $\eta$ be the corresponding mean parameter, i.e. 
\[
	\eta = \mathbb E_{\theta} y = \int y p(y|\theta) dy
\]
and $\theta = F(\eta)$ be the link function mapping from mean to the natural parameters and $G(\theta)$ its inverse. We wish to model the mean parameter $\eta=\eta(x)$ using a Gaussian process on a domain $\cX$ together with a function $\Psi$ which transforms the GP value to the natural parameter space, i.e.
\begin{equation}
\eta(x) = \Psi(f(x)), \qquad f\sim \mathcal{GP}(\mu,k).
\end{equation}
For example, the mean parameter for some models is restricted to the positive part of the real line, while the GP values cover the whole real line.  
We will consider the following examples:
\begin{itemize}
\item {\bf Normal} (with fixed variance).  $F=G=idenity$ and $\Psi$ can be identity, too, as there are no restrictions on the mean parameter space. 
\item {\bf Poisson}.  $F(\eta)=\log \eta$, $G(\theta)=e^\theta$.  $\Psi$ should take a positive value, so we consider $\Psi(v)=e^v$ or $\Psi(v)=v^2$.
\item {\bf Exponential}.  $p(y|\eta)=\exp(- y/\eta)/\eta$ and $\theta=-\eta$, $F(\eta)=-1/\eta$, $G(\theta)=-1/\theta$. $\Psi$ should take a positive value, so we consider $\Psi(v)=e^v$ or $\Psi(v)=v^2$
\end{itemize}
Note that the link function $F$ is concave for all the examples above.

\subsection{Bag model}

We will consider the aggregation in the mean parameter space. Namely, let $y^1,\ldots,y^n$ be $n$ independent aggregate responses for each of the $n$ bags of covariates $\xa=\{x^a_1,\ldots,x^a_{N_a}\}$, $a=1,\ldots,n$. We assume the following aggregation model:
\begin{equation}
	y^a\sim p(y|\eta_a), \quad \eta^a = \sum_{i=1}^{N_a} w^a_i \eta^a_i = \sum_{i=1}^{N_a} w^a_i \Psi(f(x^a_i)),\quad a=1,\ldots,n.
\end{equation}
where $w^a_i$ are fixed weights to adjust the scales among the individuals and the bag (e.g., adjusting for population size).  

We also can model individual (unobserved) variables $y^a_i$ ($i=1,\ldots,N_a)$, which follow:
\begin{equation}
y^a_i \sim p(y|\eta^a_i), \qquad \eta^a_i = \Psi(f(x^a_i)), \quad i=1,\ldots,N_a,\;a=1,\ldots,n.
\end{equation}

Note that we consider aggregation in mean parameters of responses, not in the responses themselves. If we consider a case where underlying individual responses $y^a_i$ aggregate to $y^a$ as a weighted sum, the form of the bag likelihood and individual likelihood would be different unless we restrict attention to distribution families which are closed under both scaling and convolution. However, when aggregation occurs in the mean parameter space, the form of the bag likelihood and individual likelihood is always the same. This corresponds to the following measurement process:
\begin{itemize}
\item Each individual has a mean parameter $\eta^{a}_i$ - if it were possible to sample a response for that particular individual, we would obtain a sample $y^a_i \sim p(\cdot | \eta^a_i)$
\item However, we cannot sample the individual and we can only observe a bag response. But in that case, only a single bag response is taken and depends on all individuals simultaneously. Each individual contributes in terms of an increase in a mean bag response, but this measurement process is different from the two-stage procedure by which we aggregate individual responses. 
\end{itemize}

\subsection{Marginal likelihood and ELBO}

Let $Y=(y^1,\ldots,y^n)$ (bag observations).  With the inducing points $u=f(W)$, the marginal likelihood is 
\begin{equation}
p(Y)= \int\int \prod_{a=1}^n p(y^a|\eta^a)p(f|u)p(u)dudf.
\end{equation}
The evidence lower bound can be derived as
\begin{align}
\log p(Y) & = \log \int\int \Bigl\{ \prod_{a=1}^n p(y^a|\eta^a)\frac{p(u)}{q(u)}\Bigr\} p(f|u)q(u)dudf \nonumber \\
& \geq \int\int \log \Bigl\{ \prod_{a=1}^n p(y^a|\eta^a)\frac{p(u)}{q(u)}\Bigr\} p(f|u)q(u)dudf \nonumber \\
& = \sum_{a=1}^n \frac{y^a}{\tau}\int F\Bigl(\sum_i w^a_i \Psi(f(x^a_i)) \Bigr)
q(f) df - \int c\Bigl(F\Bigl(\sum_i w^a_i \Psi(f(x^a_i)) \Bigr)\Bigr) q(f)df \nonumber \\
& \qquad - \int q(u)\log \frac{q(u)}{p(u)}du,
\end{align}
where $q(f) = \int p(f|u)q(u)du$.

By setting the variational distribution $q(u)$ as Gaussian, the third term is tractable.  The first and second terms are however tractable only in limited cases.  
The cases we develop are the Poisson bag model, described in the main text, as well as the normal bag model and the exponential bag model, described below.

\subsection{Normal bag model}
\label{app:normal}
$F$ is identity and $c(\theta)=\theta^2/2$, which makes both the first and the second terms tractable with the choice of $\Psi(v)=v$. Moreover, the viewpoints of aggregating in the mean parameters and in the individual responses are equivalent for this model and we can also allow different variance parameters for different bags (and individuals).

Consider a bag $a$ of items $\{x_i^a\}_{i=1}^{N_a}$. Each item $x_i^a$ is assumed to have a weight $w_i^a$. At the individual level, we model the (unobserved) responses $y_i^a$ as
\begin{equation}
 y^a_i | x^a_i \sim \mathcal{N}\left(w^a_i\mu^a_i, \left(w^a_i\right)^2 \tau^a_i\right)
\end{equation}
where $\mu^a_i = \mu(x^a_i)$, thus $\mu^a_i$ is a \emph{mean parameter per unit weight} corresponding to the item $x^a_i$ and it is assumed to be a function of both $x^a_i$. Similarly, $\tau^a_i$ is a variance parameter per unit weight. 
At the bag level, we consider the following model for the observed aggregate response $y^a$, assuming conditional independence of individual responses given covariates $\xa=\{x_1^a,\ldots,x_{N_a}^a\}$: 
\begin{equation}
y^a  = \sum_{i=1}^{N_a} y^a_i, \;\text{i.e.}\; y^a | \mathbf{x}^a \sim \mathcal{N}(w^a \mu^a, (w^a)^2 \tau^a), \qquad \mu^a = \sum_{i=1}^{N_a} \frac{w^a_i}{w^a} \mu^a_i, \tau^a =  \frac{\sum_{i=1}^{N_a}(w^a_i)^2{\tau}^a_i}{(w^{a})^2} 
\end{equation}
where $\mu^a$ and $\tau^a$ are the mean and variance parameters per unit weight of the whole bag $a$ and $w^a = \sum_{i=1}^{N_a} w^a_i$ is the \emph{total weight} of bag $a$. Although we can take $\tau^a_i$ to also be a function of the covariates, here for simplicity, we take $\tau^a_i = \tau^a$ to be constant per bag (note the abuse of notation). We can now compute the negative log-likelihood (NLL) across bags (assuming conditional independence given the $\mathbf{x}^a$):
\begin{equation}
\ell_0 =  -\log \left[\Pi_{a=1}^n p(y^a | \mathbf{x}^a)\right] {=} \frac{1}{2} \sum_{a=1}^n \left\{\log \left ( 2\pi \tau^a \sum_{i=1}^{N_a}(w^a_i)^2\right) +  \frac{\left(y^a - \sum_{i=1}^{N_a} w^a_i \mu^a_i\right)^2}{ \sum_{i=1}^{N_a}(w^a_i)^2 \tau^a}\right\}
 \end{equation}
where $\mu^a_i = f(x^a_i)$ is the function we are interested in, and $\tau^a$ are the variance parameters to be learnt. 

We can now consider the lower bound to the marginal likelihood as below (assuming $w_i^a=1$ here to simplify notation, while the analogous expression with non-uniform weights is straightforward):
\begin{align}
\log p(y|\Theta) & = \log \int \int p(y,f,u|X,W,\Theta) dfdu \nonumber \\
& = \log  \int \int \left(  \prod_{a=1}^n \frac{1}{\sqrt{2\pi N_a\tau^a}}\exp\left(-\frac{(y^a-\sum_{i=1}^{N_a} f(x^a_i))^2}{2N_a\tau^a}\right) \right)  \frac{p(u|W)}{q(u)} p(f| u) q(u)dfdu \nonumber \\
& \geq  \int \int \log\left\{ \prod_{a=1}^n \frac{1}{\sqrt{2\pi N_a\tau^a}}\exp\left(-\frac{(y^a-\sum_{i=1}^{N_a} f(x_i^a))^2}{2N_a\tau^a}\right)  \frac{p(u|W)}{q(u)}\right\} p(f| u) q(u)dfdu \nonumber \\
& = -\frac{1}{2}\sum_a \int \int \left\{ \frac{(y^a)^2 - 2y^a \sum_{i=1}^{N_a}f(x^a_i) + \left(\sum_{i=1}^{N_a}f(x^a_i)\right)^2}{N_a\tau^a} \right\}  
p(f| u) q(u)dfdu \nonumber \\
& \quad - \frac{1}{2}\sum_a \log(2\pi N_a\tau^a) - \int q(u)\log \frac{q(u)}{p(u|W)} du.
\label{eq:VB_obj_normal}
\end{align}
Using again a Gaussian distribution for $q(u)$, we have $q(f)=\int p(f|u)q(u)du$, which is a normal distribution and let  $q^a(f^a)$ be its marginal normal distribution of $f^a=(f(x^a_1),\ldots,f(x^a_{N_a}))$ with mean and covariance given by $m^a$ and $S^a$ as before in (\ref{eqn:meanCov}).

Then all expectations with respect to $q(f)$ are tractable and the ELBO is simply 

\begin{align}
\mathcal{L}(q,\theta)&=- \frac{1}{2}\sum_{a=1}^n \left\{ \frac{(y^a)^2 - 2y^a {\bf 1}^\top m^a  + {\bf 1}^\top \left(S^a+m^a (m^a)^\top\right){\bf 1}}{N_a\tau^a}  \right\} - \frac{1}{2}\sum_a \log(2\pi N_a\tau^a) 
 \nonumber \\
{}& \qquad\qquad - KL(q(u)||p(u|W)).
\end{align}

\subsection{Exponential bag model}
In this case, we have $F(\eta)=-1/\eta$.
We can apply the similar argument as in Lemma \ref{lem:logsum}.  For any $\alpha_i> 0$ with $\sum_i\alpha_i=1$, by the concavity of $F$, 
\begin{align*}
 \int F\left(\sum_i w_i \Psi(v_i)\right)q(v_i) dv_i &=  \int  F\left(\sum_i \alpha_i w_i/ \alpha_i  \Psi(v_i)\right)q(v_i) dv_i  \\
  & \geq  \int \sum_i \alpha_i F\left(w_i/\alpha_i \Psi(v_i)\right)q(v_i)dv_i \\
  & = \sum_i \alpha_i \int F\left(w_i / \alpha_i \Psi(v_i)\right)q(v_i) dv_i.  
\end{align*}
For $F(\eta)=-1/\eta$, the last line is equal to 
\[
\sum_i \frac{\alpha_i^2}{w_i} \int \frac{1}{\Psi(v_i)}q(v_i)dv_i.
\]
When using a normal $q$, this is tractable for several choices of $\Psi$ including $e^v$ and $v^2$.  If we let $\xi_i:=  
 \int \frac{1}{\Psi(v_i)}q(v_i)dv_i$, and maximize 
 \[
 \sum_i \alpha_i^2\frac{\xi_i}{w_i}
 \]
 under the constraint $\sum_i \alpha_i = 1$, we obtain
 \[
 	\alpha_i = \frac{(w_i/\xi_i)}{\sum_\ell (w_i/\xi_i)}. 
\]
Finally, we have a lower bound 
\begin{equation}
\int F\left(\sum_i w^i \Psi(v_i)\right)q(v_i) dv_i \geq -\frac{\sum_i (w_i/\xi_i)}{\sum_i (w_i/\xi_i)^2}
\end{equation}
where 
\[
\xi_i= \int \frac{1}{\Psi(v_i)}q(v_i)dv_i.
\]
which is tractable for a Gaussian variational family. Also with an explicit form of $\Psi$, it is easy to take the derivatives of the resulting lower bound with respect to the variational parameters in $q(v)$.

\section{Alternative approaches}
\label{sec:baselines}
\paragraph{Constant}
For the Poisson model, we can take $\lambda^a_i = \lambda^a_c$, a constant rate across the bag, then:
\begin{equation*}
\hat \lambda^a_c = \frac{y^a}{p^a} 
\end{equation*}
hence the individual level predictive distribution is the form $y^a_i \sim Poisson( \hat\lambda^a_c)$, and for unseen bag $r$, $\hat\lambda^\text{bag}_c =  \frac{1}{\sum_{a=1}^n p^a } \sum_{a=1}^n y^a$, with predictive distribution given by $y^r \sim Poisson( p^r \hat\lambda_{c}^{\text{bag}})$.

\paragraph{bag-pixel: Bag as Individual}
Another baseline is to train a model from the weighted average of the covariates, given by $x^a =\sum_{i=1}^{N_a} \frac{p^a_i}{p^a} x^a_i$ in the Poisson case, and $x^a =  \sum_{i=1}^{N_a} \frac{w^a_i}{w^a} x^a_i$ in the normal case. The purpose of this baseline is to demonstrate that modelling at the individual level is important during training. Since we now have labels and covariates at the bag level, we can consider the following model:
\begin{equation*}
y^a | x^a \sim Poisson(p^a \lambda(x^a))
\end{equation*}
with $\lambda(x^a) = \Psi(f(x^a))$ for the Poisson model. For the normal model, we have:
\begin{equation*}
y^a | x^a \sim Normal(w^a \mu(x^a), (w^a)^2 \tau)
\end{equation*}
where $\mu(x^a) = f(x^a)$ and $\tau$ is a parameter to be learnt (assuming constant across bags). Now we observe that these models are identical to the individual model, except for a difference in indexing. Hence, after learning the function $f$ at the bag level, we can transfer the model to the individual level. Essentially here we have created fake individual level instances by aggregation of individual covariates inside a bag.

\paragraph{Nystr\"om: Bayesian MAP for Poisson regression on explicit feature maps} 
Instead of the posterior based on the model \eqref{eq:likelihood}, we can also consider an explicit feature map in order to directly construct a MAP estimator. While this method does not provide posterior uncertainty over $\lambda^a_i$, it does provide an interesting connection to the settings we have considered and also manifold-regularized neural networks, as discussed below. Let $K_{zz}$ be the covariance function defined on covariates $\{ z_1, \dots z_n \}$, and consider its low rank approximation $K_{zz} \approx \mathbf{k}_{zW} K_{WW}^{-1} \mathbf{k}_{W\hspace{-0.03cm}z}$ with landmark points $W = \{w_\ell\}_{\ell=1}^m$ and ${\bf k}_{zW}=(k(z,w_1),\ldots,k(z,w_\ell))^T$. By using landmark points $W$, we have avoided computation of the full kernel matrix, reducing computational complexity. Under this setup, we have that $K_{zz} \approx \Phi_z {\Phi^\top_z}$, with $\Phi_z=\mathbf{k}_{zW} K_{WW}^{-\frac{1}{2}}$ being the explicit (Nystr\"om) feature map. Using this explicit feature map $\Phi$, we have the following model: 
\begin{gather*}
f^a_i = \phi^a_i \beta,  \qquad  \beta  \sim  \mathcal{N}(0, \gamma^2 I ) \\
y^a | \xa   \sim {\rm Poisson}\left( \sum_{i=1}^{N_a} p^a_i \lambda(x^a_i) \right),\qquad \lambda(x^a_i)  = \Psi(f^a_i), 
\end{gather*}
where $\gamma$ is a prior parameter and $\phi^a_i$ is the corresponding $i^{th}$ row of $\Phi_{\xa}$. We can then consider a MAP estimator of the model coefficients $\beta$:
\begin{equation}
\hat\beta = \text{argmax}_\beta \log [\Pi_{a=1}^n p(y^a | \beta, \xa)] + \log p(\beta).
\end{equation}
This essentially recovers the same model as in (\ref{eqn:poiss_nll}) with the standard $l_2$ loss regularising the complexity of the function. This model can be thought of in several different ways, for example as a weight space view of the GP (\cite{williams2006gaussian} for an overview), or as a MAP of the Subset of Regressors (SoR) approximation  \cite{smola2001sparse} of the GP when $\sigma = 1$. Additional we may include manifold regulariser as part of the prior, see discussion below about neural network.
\paragraph{NN: Manifold-regularized neural networks}
\label{app:manifold_nn}
The next approach we consider is a parametric model for $f$ as in \cite{kotzias2015group}, and search the best parameter to minimize negative log-likelihood $\ell_0$ across bags.  This paper considers a neural network with parameters $\theta$ for the model $f$, and uses the back-propagation to learn $\theta$ and hence individual level model $f$. However, since we only have aggregated observations at the bag level, but lots of individual covariate information, it is useful to incorporate this information also, by enforcing smoothness on the data manifold given by the unlabelled data. To do this, following \cite{kotzias2015group} and \cite{patrini2014almost}, we pursue a semisupervised view of the problem and include an additional manifold regularisation term \cite{belkin2006manifold} (rescaling with $N_{\text{total}}^2$ during implementation): 
\begin{equation}
\ell_1 = \sum_{w=1}^{N_{\text{total}}} \sum_{u=1}^{N_{\text{total}}}  (f_u-f_w)^2 k_L(x_u, x_w) = \mathrm{f}^\top \mathrm{L} \ \mathrm{f}
\end{equation}
where we have suppressed the bag index, $N_{\text{total}}$ represents the total number of individuals, $k_L(\cdot, \cdot) $ is some user-specified kernel\footnote{In practice, this does not have to be a positive semi-definite kernel, it can be derived from any notion of similarity between observations, including k-nearest neighbours.}, $\mathrm{f} = [f_1, \dots, f_{N_{\text{total}}}]^\top$, $\mathrm{L}$ is the Laplacian defined as $\mathrm{L} = diag(\mathrm{K_{L}} \mathbbm{1}^\top) - \mathrm{K_{L}}$, where $\mathbbm{1}$ is just $[1,\dots,1]$ and $ \mathrm{K_{L}}$ is a kernel matrix.  Although this term involves calculation of a kernel matrix across individuals, in practice we consider stochastic gradient descent (SGD) and also random Fourier features \cite{rahimi2007random} or Nystr\"{o}m approximation (see Appendix \ref{app:rff}), with scale parameter $\lambda_1$ to control the strength of the regularisation. Similarly, one can also consider manifold regularisation at the bag level, if bag-level covariates/embeddings are available, for further details, see Appendix \ref{app:bag_reg}. 

In fact, the same regularisation can be applied to the MAP estimation with the explicit feature maps. This is equivalent to having a prior $\beta \sim \mathcal{N}(0, \sigma^2 I + ( \lambda_1 \Phi^\top \mathrm{L} \Phi)^{-1} )$ that is data dependent and incorporates the structure of the manifold \footnote{In order to guarantee positive definiteness of Laplacian, one can add $\epsilon I$, where $\epsilon > 0$.}. 

For implementation, we consider a one hidden layer neural network, with also an output layer, for a fair comparison to the Nystr\"om approach. For activation function, we consider the Rectified Linear Unit (ReLU).

\paragraph{MAP estimation of GP}
We introduce $p(f, u) = p(f| u)p(u|W)$ and consider the posterior given by $p(u | f, y, w, \theta)$, where here the conditional distribution $f | u$ is given by:
\begin{equation}
f|u\sim  GP(\tilde{\mu}_u,\tilde{K}),
\end{equation}
\[
	\tilde{\mu}(z)= \mu_z + {\bf k}_{zW}K_{WW}^{-1} (u - \mu_W), \quad \tilde{K}(z,z')=k(z,z')-{\bf k}_{zW}K_{WW}^{-1} {\bf k_{Wz'}}
\]
where ${\bf k}_{zW}=(k(z,W_1),\ldots,k(z,W_\ell))^T$. Using Bayes rule, we obtain:
\begin{eqnarray*}
\log [p(u | f, y, w)] & = & \log[ p(y | f, u)  p(f, u | X, W)] \\
& = & \log [p(y|f) p(f|u, X) p(u|W)] \\
& = & \sum_{a=1}^n y^a \log(p^a \lambda^a) + \sum_{a=1}^n p^a\lambda^a - \sum_{a=1}^n \log( y^a !) + \log ( p(f|u, X)) + \log( p(u | W) )
\end{eqnarray*}
where $p(f|u, X) \sim \mathcal{N}(\tilde{\mu}_u, \tilde{K})$ given by above, and $p(u|W) \sim \mathcal{N}(\mu_W, \Sigma_{WW})$, i.e. 
\begin{equation}
\log p(f|u, X) + \log p(u | W) = -\frac{1}{2} ( \log (|\tilde{K}| |\Sigma_{WW}|)+ (f - \tilde{\mu}_u)^\top \tilde{K}^{-1} (f - \tilde{\mu}_u) + (u - \mu_W)^\top \Sigma_{WW}^{-1} (u - \mu_W)
\end{equation}
Here, we can not perform SGD, as the latter terms does not decompose into a sum over the data. More importantly, here we require the computation of $\tilde{K}$, which contains the kernel matrix $K$, even after the use of landmarks. This direct approach is not feasible for large number of individuals, which is true in our target application, and hence we do not pursue this method, and consider Nystr\"om and NN as baselines.
\section{Random Fourier Features on Laplacian}
\label{app:rff}
Here we discuss using random Fourier features \cite{rahimi2007random} to reduce computational cost in calculation of the Laplacian defined as $\mathrm{L} = diag(\mathrm{K} \mathbbm{1}^\top) - \mathrm{K}$, where $\mathbbm{1}$ is just $[1,\dots,1]$ and $ \mathrm{K}$. Suppose the kernel is stationary i.e. $k_w(x-y) = k(x, y)$ (some examples include the gaussian and matern kernel), then using random Fourier features, we obtain $\mathrm{K} \approx \Phi \Phi^\top$, where $\Phi \in \mathbb{R}^{ b_N \times m}$, $b_N$ denotes the total number of individuals in the batch and $m$ denotes the number of frequencies. Now we have:
\begin{equation}
\mathrm{f}^\top \mathrm{L} \mathrm{f} \approx \mathrm{f}^\top diag(\Phi \Phi^\top \mathbbm{1}^\top) \mathrm{f} -  \mathrm{f}^\top \Phi \Phi^\top \mathrm{f} = \mathrm{f}^\top diag(\mathrm{\Phi \Phi^\top} \mathbbm{1}^\top) \mathrm{f} - || \Phi^\top f ||^2_2
\end{equation}
In both terms, we can avoid computing the kernel matrix, by carefully selecting the order of computation. Note another option is to consider Nystr\"{o}m approximation with landmark points $\{ z_1, \dots z_m\}$, then $K \approx K_{nm} K_{mm}^{-1} K_{mn}$, where $K_{mm}$ denotes the  kernel matrix on landmark points, while $K_{nm}$ is the kernel matrix between landmark and data. Then $\Phi = K_{nm}  K_{mm}^{-\frac{1}{2}}$. 
% Bag Regularisation
\section{Bag Manifold regularisation}
\label{app:bag_reg}
Suppose we have bag covariates $s^a$ (note these are for the entire bag), and also some summary statistics of a bag, e.g. mean embeddings \cite{muandet2016kernel} given by $H^a = \frac{1}{N_a}\sum_{i=1}^{N_a} h(x^a_i)$, with some user-defined $h$. Then similarly to individual level manifold regularisation, we can consider manifold regularisation at the bag level (assuming a seperable kernel for simplicity), i.e. 
\begin{equation}
\ell_2 = \sum_{l=1}^n \sum_{m=1}^n (F^l - F^m)^2 k_s(s^l, s^m) k_h(H^l, H^m) = \mathrm{F}^\top \mathrm{L}_{\text{bag}} \mathrm{F}
\end{equation}
where $F^a = \frac{1}{N_l} \sum_{i=1}^{N_a} f^a_i$, $k_s$ is a kernel on bag covariates $s^a$, $k_\mu$ is a kernel on $H^a$, $\mathrm{L}_{\text{bag}}$ is the bag level Laplacian with the corresponding kernel, and $F = [F^1, \dots, F^n]^\top$. Combining all these terms, we have the following loss function to minimise:
\begin{equation}
\ell = \frac{1}{b} \ell_0  + \frac{\lambda_1} {b_{N}^2} \ell_1 + \frac{\lambda_2}{b_{N}^2} \ell_2 
\end{equation}
where $b$ is the mini-batch size in SGD, $B_N$ is the total number of individuals in each mini-batch, $\lambda_1$ and $\lambda_2$ are parameters controlling the strength of the respective regularisation.
%Baselines
\section{Additional details for Poisson variational derivation}
\label{app:poisson_details}
\subsection{Log-sum lemma}
\label{app:lemma}
\begin{lem}\label{lem:lb}
Let $v=[v_{1},\ldots,v_{N}]^{\top}$ be a random vector with probability
density $q(v)$, and let $w_{i}\geq0$, $i=1,\ldots,N$. Then, for
any non-negative valued function $\Psi(v)$,
\[
\int\log\bigl(\sum_{i=1}^{N}w_{i}\Psi(v_{i})\bigr)q(v)dv\geq\log\Bigl(\sum_{i=1}^{N}w_{i}e^{\xi_{i}}\Bigr),
\]
where 
\[
\xi_{i}:=\int\log\Psi(v_{i})q_{i}(v_{i})dv_{i}.
\]
\end{lem}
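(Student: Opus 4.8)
The plan is to recognize the right-hand side as exactly what Jensen's inequality produces when applied to the \emph{log-sum-exp} function, exploiting its convexity. For the indices with $w_i>0$ (terms with $w_i=0$ contribute nothing to either side and may be discarded), set $g_i(v)=\log w_i + \log\Psi(v_i)$, so that $\sum_{i=1}^{N} w_i\Psi(v_i)=\sum_{i=1}^{N} e^{g_i(v)}$. The integrand on the left is then precisely $\phi\bigl(g_1(v),\ldots,g_N(v)\bigr)$, where $\phi(t_1,\ldots,t_N):=\log\sum_{i=1}^{N} e^{t_i}$ is the log-sum-exp map.

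First I would establish (or cite as standard) that $\phi$ is convex on $\mathbb{R}^N$, which follows from a direct check that its Hessian is positive semidefinite, or from viewing $\phi$ as a smoothed maximum. Next I would apply Jensen's inequality for convex functions to the random vector $g(v)=(g_1(v),\ldots,g_N(v))$ under $q$, giving
\[
\int \phi(g(v))\,q(v)\,dv \;\geq\; \phi\!\left(\int g(v)\,q(v)\,dv\right).
\]
The left-hand side is exactly $\int \log\bigl(\sum_i w_i\Psi(v_i)\bigr)q(v)\,dv$, the quantity we wish to bound from below. For the right-hand side I would compute each coordinate of the mean: since $g_i$ depends on $v$ only through $v_i$, integrating out the remaining variables gives $\int g_i(v)\,q(v)\,dv=\log w_i+\int\log\Psi(v_i)\,q_i(v_i)\,dv_i=\log w_i+\xi_i$, with $q_i$ the marginal of $v_i$. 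Substituting back yields $\phi(\cdots)=\log\sum_i e^{\log w_i+\xi_i}=\log\sum_i w_i e^{\xi_i}$, which is the claimed bound.

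The conceptual core is entirely the convexity of log-sum-exp together with Jensen; the main obstacle is not depth but care with degenerate cases. In particular, $\Psi(v_i)$ may vanish on a set of positive measure, so that $\log\Psi(v_i)$ and hence $\xi_i$ can equal $-\infty$; in that event the corresponding term simply drops out of the right-hand sum ($w_i e^{\xi_i}=0$) and the inequality is only strengthened, so the bound still holds. I would also record the one structural point that keeps the statement clean, namely that the reduction to the \emph{marginal} densities $q_i$ is automatic because each $g_i$ is a function of the single coordinate $v_i$, so no independence assumption on the joint $q$ is required. Specializing to $\Psi(v)=e^{v}$ with Gaussian marginals then gives $\xi_i=m^a_i$ and recovers the bound \eqref{eq:lb_lse} at once.
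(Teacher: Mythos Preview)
Your proof is correct and arrives at the same bound, but by a different route than the paper. The paper introduces auxiliary simplex weights $\alpha_1,\ldots,\alpha_N$, applies Jensen to the \emph{concave} scalar $\log$ inside the sum to obtain
\[
\int\log\Bigl(\sum_i w_i\Psi(v_i)\Bigr)q(v)\,dv\;\geq\;\sum_i\alpha_i\xi_i+\sum_i\alpha_i\log\frac{w_i}{\alpha_i},
\]
and then maximizes the right-hand side over $\alpha$ on the simplex via Lagrange multipliers, yielding $\alpha_i\propto w_i e^{\xi_i}$ and hence the stated bound. You instead apply Jensen once to the \emph{convex} multivariate log-sum-exp $\phi(t)=\log\sum_i e^{t_i}$, which bypasses the variational optimization entirely. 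The two arguments are dual to one another: the paper's $\alpha$-step is precisely the Fenchel representation of log-sum-exp (its conjugate being negative entropy on the simplex), so your approach is the ``already-optimized'' version of theirs. Your route is shorter and makes the role of marginals $q_i$ transparent; the paper's route has the minor expository benefit of exhibiting the optimal $\alpha_i$ explicitly (a softmax in $\xi_i$), though this is not used elsewhere. Your treatment of the degenerate cases $w_i=0$ and $\xi_i=-\infty$ is adequate for the applications at hand, where $\Psi$ is a.s.\ positive under Gaussian marginals.
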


\begin{proof}
Let $\alpha_{1},\ldots,\alpha_{N}$ be non-negative numbers with $\sum_{i=1}^{N}\alpha_{i}=1$.
It follows from Jensen's inequality that 
\begin{align}
\int\log\bigl(\sum_{i=1}^{N}w_{i}\Psi(v_{i})\bigr)q(v)dv & =\nonumber \\
\int\log\Bigl(\sum_{i=1}^{N}\alpha_{i}\tfrac{w_{i}}{\alpha_{i}}\Psi(v_{i})\Bigr)q(v)dv & \geq\nonumber \\
\sum_{i=1}^{N}\alpha_{i}\left[\int\log\Bigl(\Psi(v_{i})\Bigr)q(v_{i})dv_{i}+\log\frac{w_{i}}{\alpha_{i}}\right] & =\nonumber \\
\sum_{i=1}^{N}\alpha_{i}\xi_{i}+\sum_{i=1}^{N}\alpha_{i}\log\frac{w_{i}}{\alpha_{i}}.\label{eq: lma_proof}
\end{align}
By Lagrange multiplier method, maximizing the last line with respect
to $\alpha$ gives 
\[
\alpha_{i}=\frac{w_{i}e^{\xi_{i}}}{\sum_{j=1}^{N}w_{j}e^{\xi_{j}}}.
\]
Plugging this to (\ref{eq: lma_proof}) completes the proof.
\end{proof}
\subsection{A lower bound of marginal likelihood for $\Psi(f)=e^f$ and $\Psi(f)=f^2$}
\label{sec:lb_square}
Using Lemma \ref{lem:lb},  we obtain that 
\begin{equation}
\int \log\bigl(\sum_{i=1}^{N} p^a_i \Psi(v^a_i)\bigr) q(v^a)dv^a
\geq  \log\Bigl(\sum_{i=1}^{N} p^a_i \Psi(\xi^a_i) \Bigr),
\end{equation}
where 
\[
\xi^a_i = \int \log \Psi(v^a_i)q^a_i(v^a_i)dv^a_i.
\]
The above lower bound is tractable for the popular functions $\Psi(v)=v^{2}$
and $\Psi(v)=e^{v}$ under the normal variational distributions $q^a(v^a) \sim \mathcal{N}\left(m^a,S^a\right).$
In particular,
\begin{eqnarray*}
\Psi(v)=e^{v}: & \xi^{a}_i= & \int v^a_{i}q^a_{i}(v^a_{i})dv^a_{i}=m^a_{i},\\
\Psi(v)=v^{2}: & \xi^{a}_i= & \int\log (v^a_{i})^{2}q^a_{i}(v^a_{i})dv^a_{i}=-G\left(-\frac{m^a_i}{2S^a_{ii}}\right)+\log\left(\frac{S^a_{ii}}{2}\right)-\gamma,
\end{eqnarray*}
where $\gamma$ is the Euler constant and 
\[
G(t)=2t\sum_{j=0}^{\infty}\frac{j!}{\left(2\right)_{j}\left(3/2\right)_{j}}t^{j}
\]
 is the partial derivative of the confluent hypergeometric function \cite{lloyd2015variational,ancarani2008derivatives}. However, in this work we focus on the Taylor series approximation for $\Psi(v)=v^{2}$, as implementation of the above bound uses a large look-up table and involves linear interpolation. Furthermore, it is suggested in experiments that the secondary lower bound proposed above in Lemma \ref{lem:lb} can lead to poor calibration, for more details, refer to Section \ref{sec:exp}.
\subsection{KL Term}
\label{app:kl_term}
Since $q(u)$ and $p(u|W)$ are both normal distribution, the KL divergence is tractable:
\begin{equation}
KL(q(u)||p(u|W)) = \frac{1}{2}\Bigl\{ Tr[K_{WW}^{-1}\Sigma_u] + \log\frac{|K_{WW}|}{|\Sigma_u|} - m + (\mu_W-\eta_u)^T K_{WW}^{-1} (\mu_W-\eta_u) \Bigr\}
\end{equation}
\subsection{Taylor series approximation in the variational method}
\label{sec:taylor}
We consider the integral
$$\int \log\bigl(\sum_{i=1}^{N} p^a_i (v^a_i)^2 \bigr) q^a(v^a)dv^a$$
where $q^a$ is $\mathcal N(m^a, S^a)$. We note that this can be written as $\mathbb{E}\log\left\Vert V^a\right\Vert ^{2}$, where $V^a \sim N( \tilde{m}^a, \tilde{S}^a)$,
with $P^a = diag\left(p^a_1,\dots, p^a_{N_a}\right), \tilde{m}^a = {P^a}^{1/2} m^a$ and $\tilde{S}^{a} = {P^a}^{1/2} S^a{P^a}^{1/2}$. Note that $\left\Vert V^a\right\Vert ^{2}$ follows a non-central chi-squared distribution. We now resort to a Taylor series approximation for $\mathbb{E}\log\left\Vert V^a\right\Vert ^{2}$ (similar to \cite{teh2007collapsed}) around $\mathbb{E}\left\Vert V^a\right\Vert ^{2}=\left\Vert \tilde{m}^a \right\Vert ^{2}+tr\tilde{S}^a$,
resulting in 
\begin{eqnarray*}
\mathbb{E}\log\left(\left\Vert V^a\right\Vert ^{2}\right) & = & \log\left(\mathbb{E}\left\Vert V^a\right\Vert ^{2}\right)\\
&&+\mathbb{E}\left[\frac{\left\Vert V^a\right\Vert ^{2}-\mathbb{E}\left\Vert V^a\right\Vert ^{2}}{\mathbb{E}\left\Vert V^a\right\Vert ^{2}}-\frac{\left(\left\Vert V^a\right\Vert ^{2}-\mathbb{E}\left\Vert V^a\right\Vert ^{2}\right)^{2}}{2\left(\mathbb{E}\left\Vert V^a\right\Vert ^{2}\right)^{2}}+\mathcal{O}\left(\left(\left\Vert V^a\right\Vert ^{2}-\mathbb{E}\left\Vert V^a\right\Vert ^{2}\right)^{3}\right)\right]\\
 & \approx & \log\left(\left\Vert \tilde{m}^a\right\Vert ^{2}+tr\tilde{S}^a\right)-\frac{2 \tilde{m}^{a\top}\tilde{S}^a\tilde{m}^a+tr\left(\left(\tilde{S}^a\right)^2\right)}{\left(\left\Vert \tilde{m}^a\right\Vert ^{2}+tr\tilde{S}^a\right)^{2}}.
\end{eqnarray*}

As commented in \cite{teh2007collapsed}, approximation is very accurate when $\mathbb{E}\left\Vert V^a\right\Vert ^{2}$ is large, but the caveat is that the Taylor series converges only for $\left\Vert V\right\Vert ^{2}\in(0,2\mathbb{E}\left\Vert V\right\Vert ^{2})$
so this approach effectively ignores the tail of the non-central chi-squared.

\section{Code}
\label{app:implementaion}
All of our models were implemented in TensorFlow, and code will be published and available for use. 

\section{Additional Malaria Experimental Results}
\label{app:malaria}
Here we provide additional experimental results for the malaria dataset. In table \ref{tab:malaria}, we provide results for bag level performance for NLL and MSE with $10$ different test sets (after retrial of the experiments, splitting the data across train, early-stop, validation and testing). Statistical significance was not establish for the best performing Nystr\"{o}m method versus the VBAgg methods, this is shown in Table \ref{tab:malaria_p_values}. We further provide additional prediction/uncertainty patches for $3$ different splits to highlight the general behaviour of the trained models, with further explanation and details below.  

It is also noted in all cases $\lambda^a_i$ is the incidence rate per $1000$ people. For VBAgg and Nystr\"{o}m, we use an additive kernel, between an ARD kernel and a Matern kernel:
\begin{equation}
\label{eqn:malaria_kernel}
k((x, s_x),(y,s_y)) = \gamma_{1} \exp\left(-\frac{1}{2}\sum_{k=1}^{18} \frac{1}{\ell_k}(x_k - y_k)^2 \right) + \gamma_{2} \left(1 + \frac{\sqrt{3}\vert\vert s_x - s_y \vert {\vert}_2 }{\rho}\right) \exp \left(- \frac{\sqrt{3}\vert\vert s_x - s_y \vert {\vert}_2 }{\rho} \right)
\end{equation}
where $x, y$ are covariates, and $s_x$, $s_y$ are their respective spatial location. Here, we learn any scale parameters and weights during training. For the NN, we also use this kernel as part of manifold regularisation, however we use an RBF kernel instead of an ARD kernel, due to parameter tuning reasons (we can no longer learn these scales).

For constant model, bag rate predictions are computed by, $p^a \hat\lambda_{c}^{\text{bag}}$,where $\hat\lambda^\text{bag}_c =  \frac{1}{\sum_{a=1}^n p^a } \sum_{a=1}^n y^a$. This essentially takes into account of population. 
\begin{table}[H]
\centering
\caption{Results for the Poisson Model on the malaria dataset with $10$ different re-splits of train, early-stopping, validation and test. Approximately, $191$ bags are used for test set.  Bag performance is measured on a test set, with MSE computed between $\log(y^a)$ and $\log(\sum_{i=1}^{N_a} p^a_i \hat{\lambda}^a_i)$. Brackets include standard deviation.}
\label{tab:malaria}
\begin{tabular}{lllll}
          & Bag NLL & Bag MSE (Log) \\ \hline
Constant   & 173.1 (31.2)     & 4.08 (0.13)  \\
Nystr\"{o}m-Exp & 88.1 (25.1)  & 1.31 (0.15)   \\
VBAgg-Sq-Obj  &  94.1  (34.0) & 1.21 (0.05)   \\
VBAgg-Exp-Obj & 97.2 (39.6)  & 1.04 (0.11)  \\
VBAgg-Sq      & 97.6 (39.0)   & 1.38 (0.18)   \\
VBAgg-Exp     & 99.2 (39.8)  & 1.21 (0.19)  \\
NN-Exp        & 164.4 (127.8) & 1.82 (0.29)   \\ 
\hline
\end{tabular}
\end{table}
\begin{table}[H]
\centering
\caption{p-values from a Wilcoxon signed-rank test for Nystr\"{o}m-Exp versus the methods below for Bag NLL and MSE for the malaria dataset. The null hypothesis is Nystr\"{o}m-Exp performs equal or worse than the considered method on the test bag performance.}
\label{tab:malaria_p_values}
\begin{tabular}{lll}
        & NLL           & MSE           \\ \hline
Constant &0.0009766& 0.0009766\\ 
NN-Exp  &  0.00293    & 0.0009766  \\
VBAgg-Sq-Obj & 0.1162 & 0.958 \\ 
VBAgg-Sq & 0.1377 & 0.1611 \\ 
VBAgg-Exp-Obj & 0.08008 & 1.0 \\ 
VBAgg-Exp  & 0.09668 & 0.958 \\ \hline
\end{tabular}
\end{table}
\begin{table}[H]
\centering
\caption{p-values from a Wilcoxon signed-rank test for VBAgg-Sq versus the methods below for Bag NLL and MSE for the malaria dataset. The null hypothesis is VBAgg-Sq performs equal or worse than the considered method on the test bag performance.}
\label{tab:p_malaria}
\begin{tabular}{lll}
        & NLL           & MSE           \\ \hline
Constant & 0.0009766 &0.0009766 \\ 
NN-Exp  &  0.01855    & 0.001953 \\
VBAgg-Sq-Obj & 0.6234 & 0.9861 \\ 
Nystr\"{o}m-Exp & 0.8838 & 0.8623  \\ 
VBAgg-Exp-Obj & 0.6875  & 1.0\\ 
VBAgg-Exp  & 0.3477 & 0.9346 \\ \hline
\end{tabular}
\end{table}
\FloatBarrier
\subsection{Predicted log malaria incidence rate for various models}
\label{app:malaria_preds}
\paragraph{Constant: Bag level observed incidences}
This is the baseline with $\hat{\lambda}^a_i$ being constant throughout the bag, as shown in Figure \ref{fig:constant_malaria}. For training, we only use $60\%$ of the data.
\begin{figure}[H]
\centering
\includegraphics[width=\linewidth]{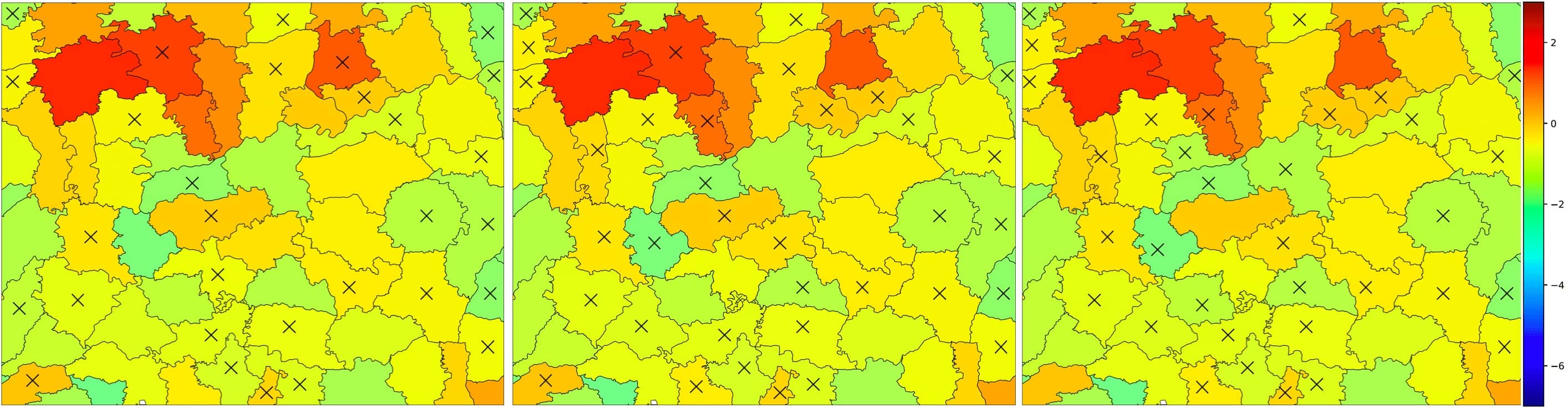}
\captionof{figure}{Predicted  $\hat{\lambda}^a_i$ on log scale using constant model, for $3$ different re-splits of the data.$
\times$ denote non-train set bags.}
\label{fig:constant_malaria}
\end{figure}
\FloatBarrier

\paragraph{VBAgg-Sq-Obj}
This is the VBAgg model with $\Psi(v) = v^2$ and tuning of hyperparameters is performed based on training objective, the lower bound to the marginal likelihood, we ignore early-stop and validation set here. The uncertainty of the model seems reasonable, and we also observe that in general the areas that are not in the training set have higher uncertainties. Furthermore, in all cases, malaria incidence was predicted to be higher near the river, as discussed in Section \ref{sec:malaria_exp}.
\begin{figure}[H]
\centering
\includegraphics[width=\linewidth]{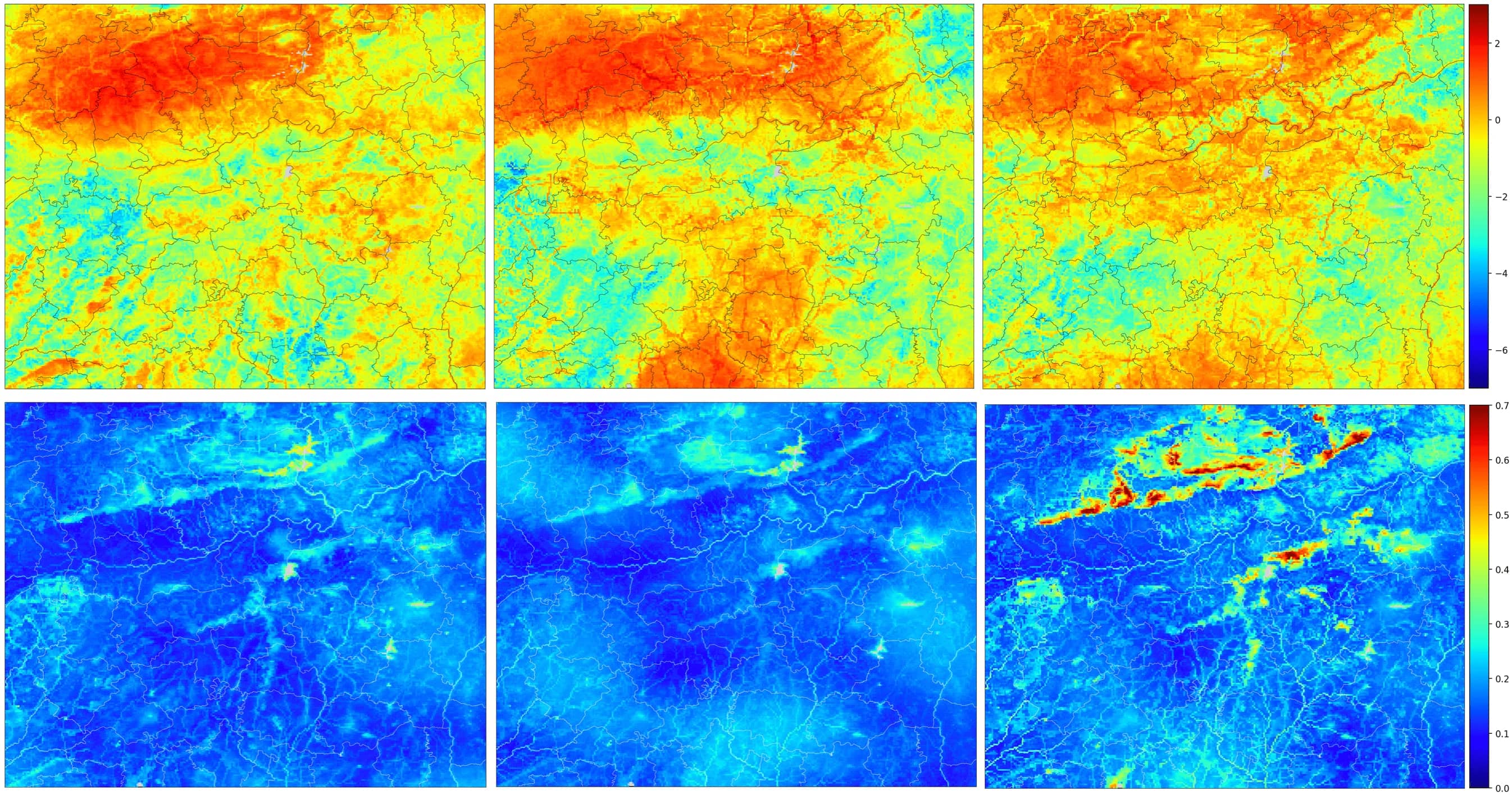}
\captionof{figure}{\textbf{Top:} Predicted  $\hat{\lambda}^a_i$ on log scale for VBAgg-Sq-Obj. \textbf{Bottom:} Standard deviation of the posterior $v$ in (\ref{eqn:meanCov}) with VBAgg-Sq-Obj.}
\label{fig:vbagg_square_obj_malaria}
\end{figure}
\FloatBarrier

\paragraph{VBAgg-Sq}
This is the VBAgg model with $\Psi(v) = v^2$ and tuning of hyperparameters is performed based on NLL at the bag level. Predicted incidence are similar to the VBAgg-Sq-Obj model. The uncertainty of the model is less reasonable here, this is expected behaviour, as we are tuning hyperparameters based on NLL here. In the first patch, the same parameters was chosen as VBAgg-Sq-Obj.
\begin{figure}[H]
\centering
\includegraphics[width=\linewidth]{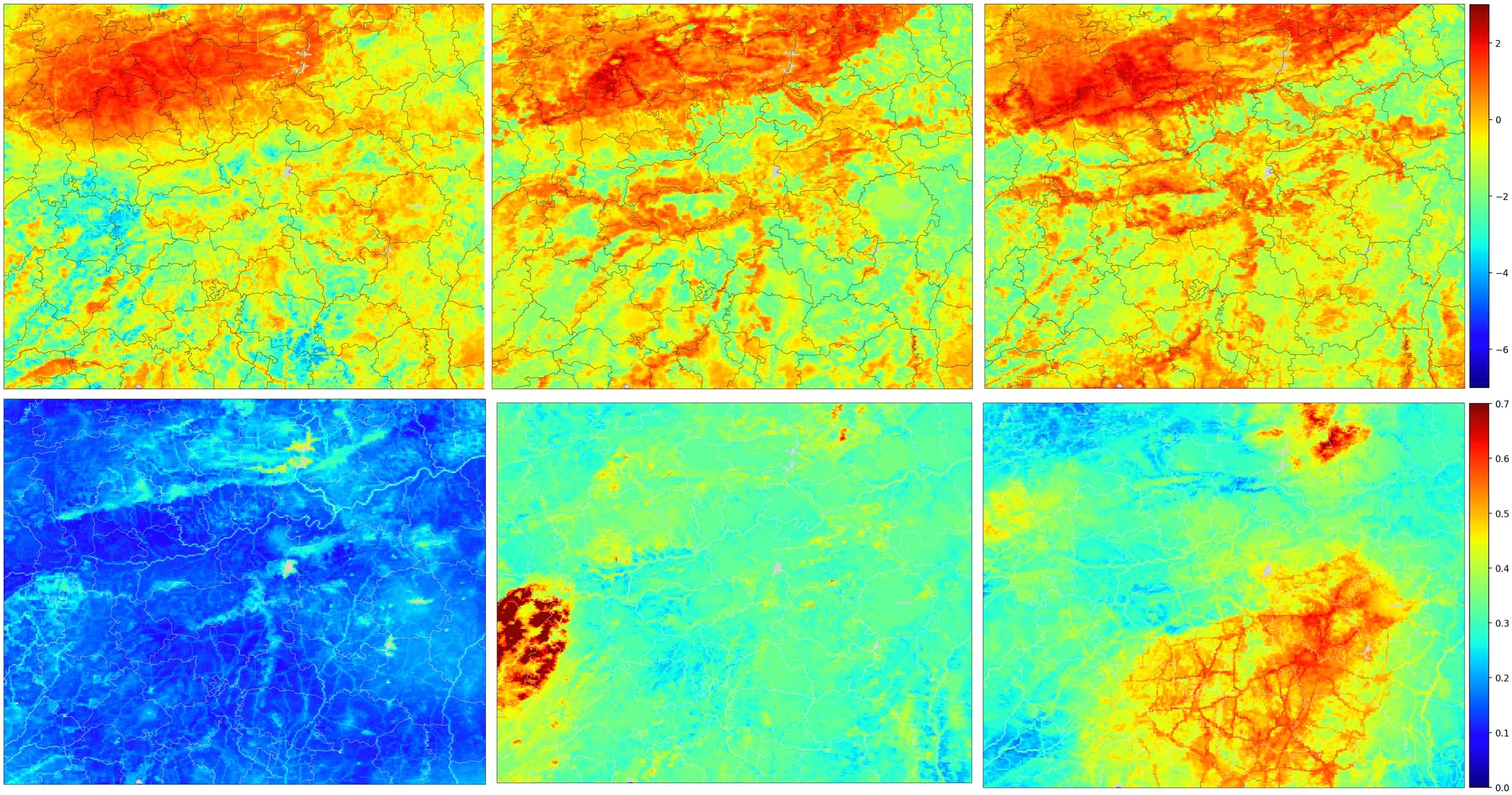}
\captionof{figure}{\textbf{Top:} Predicted  $\hat{\lambda}^a_i$ on log scale for VBAgg-Sq. \textbf{Bottom:} Standard deviation of the posterior $v$ in (\ref{eqn:meanCov}) with VBAgg-Sq.}
\label{fig:vbagg_square_malaria}
\end{figure}
\FloatBarrier

\paragraph{VBAgg-Exp-Obj}
This is the VBAgg model with $\Psi(v) = e^v$ and tuning of hyperparameters is performed based on training objective, the lower bound to the marginal likelihood, we ignore early-stop and validation set here. Predicted incidence seem to be stable in general, though some smoothness is observed. The uncertainty of the model is also not very reasonably here, but this behaviour was observed in the Toy experiments, and likely due to an additional lower bound.
\begin{figure}[H]
\centering
\includegraphics[width=\linewidth]{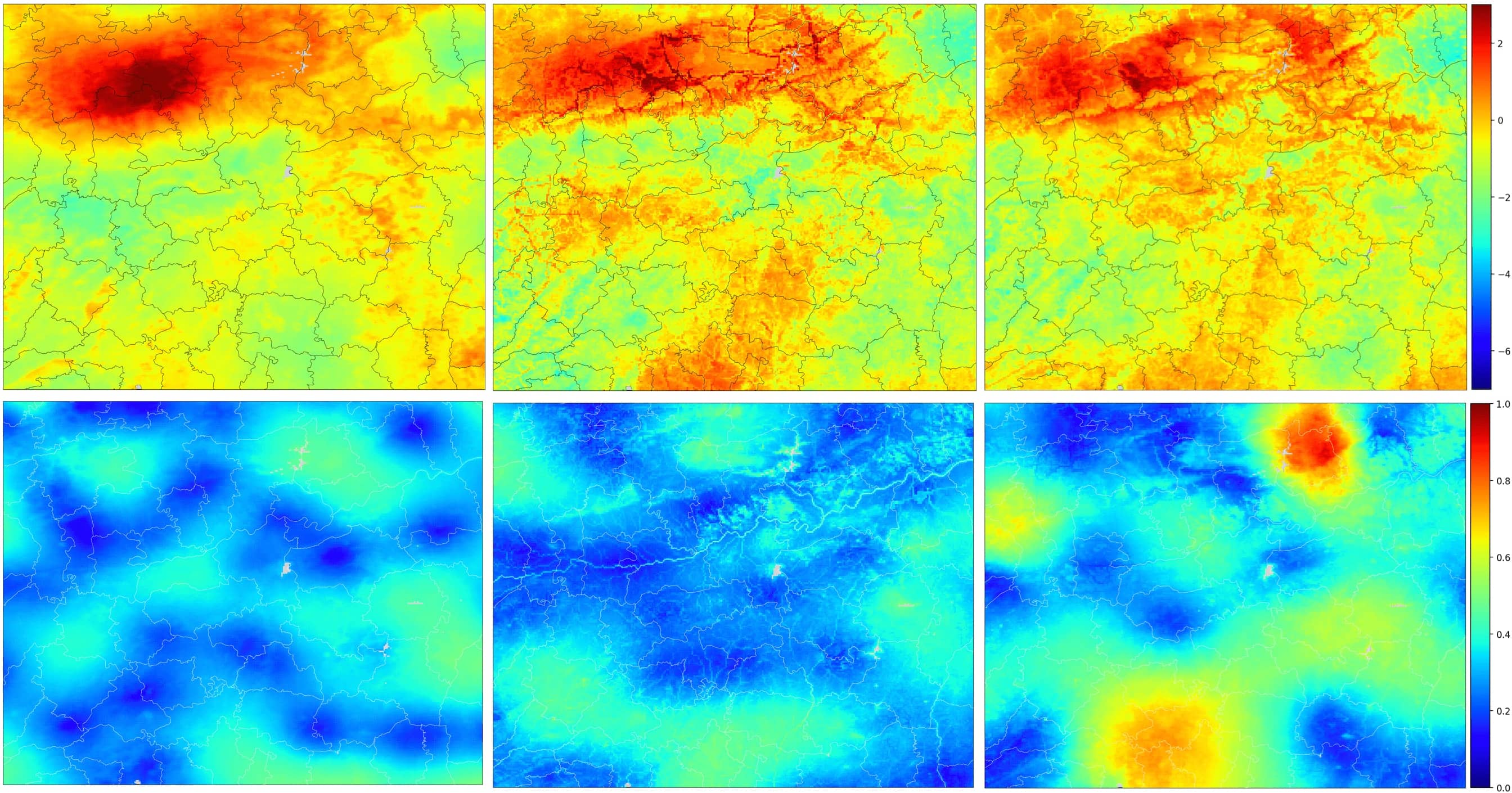}
\captionof{figure}{\textbf{Top:} Predicted  $\hat{\lambda}^a_i$ on log scale for VBAgg-Exp-Obj.\textbf{Bottom:} Standard deviation of the posterior $v$ in (\ref{eqn:meanCov}) with VBAgg-Exp-Obj.}
\label{fig:vbagg_exp_obj_malaria}
\end{figure}
\FloatBarrier

\paragraph{VBAgg-Exp}
This is the VBAgg model with $\Psi(v) = e^v$ and tuning of hyperparameters is performed based on NLL. For details, see discussion above for the VBAgg-Exp-Obj model.
\begin{figure}[H]
\centering
\includegraphics[width=\linewidth]{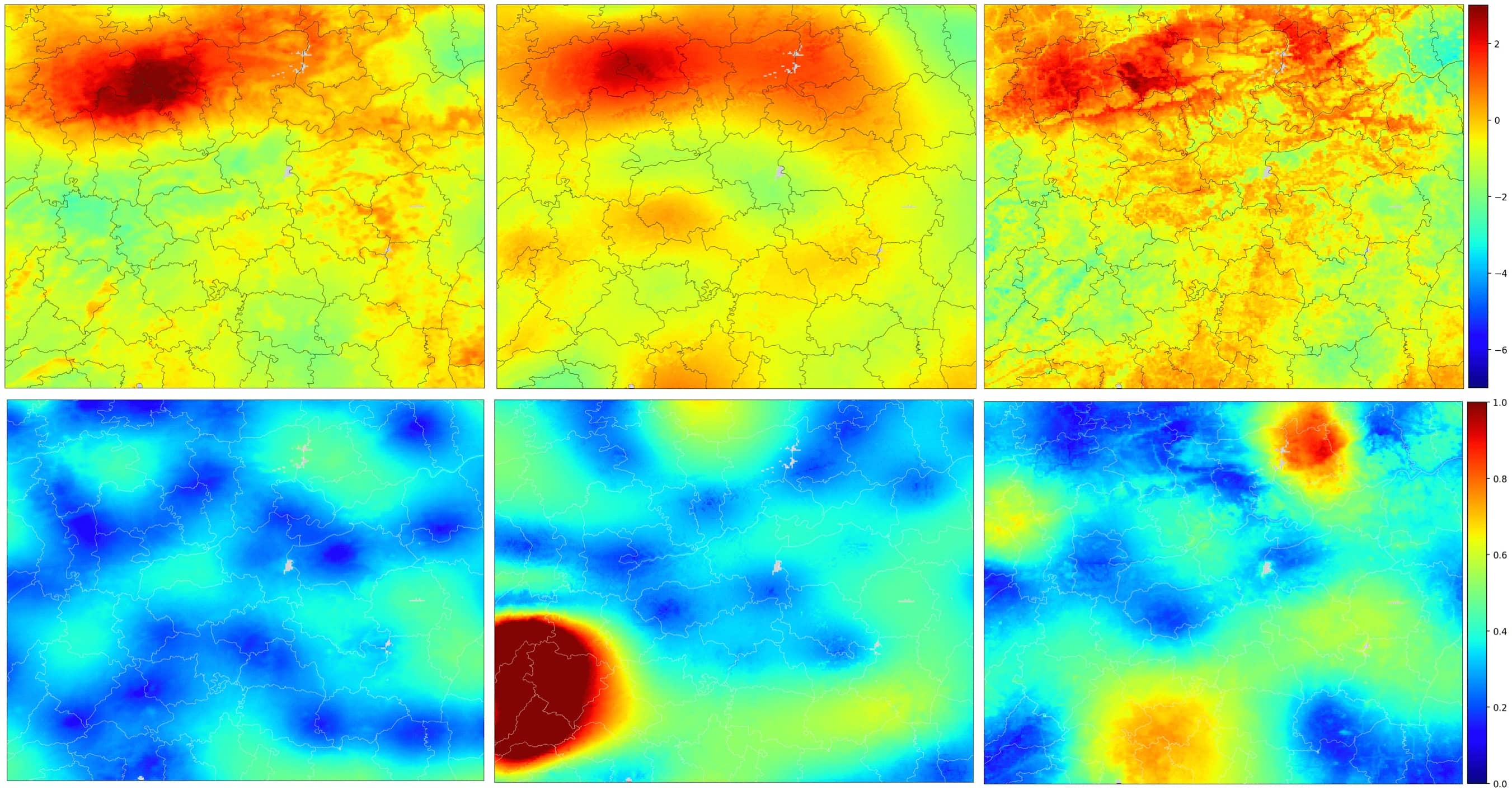}
\captionof{figure}{\textbf{Top:} Predicted  $\hat{\lambda}^a_i$ on log scale for VBAgg-Exp. \textbf{Bottom:} Standard deviation of the posterior $v$ in (\ref{eqn:meanCov}) with VBAgg-Exp.}
\label{fig:vbagg_exp_malaria}
\end{figure}
\FloatBarrier

\paragraph{Nystr\"{o}m-Exp}
This is the Nystr\"{o}m-Exp model, it is clear that while it performs best in terms of bag NLL, sometimes prediction are too smooth in the pixel space, this is because it optimises directly bag NLL. This pattern might be seen to be unrealistic, and may cause useful covariates to be neglected.
\begin{figure}[H]
\centering
\includegraphics[width=\linewidth]{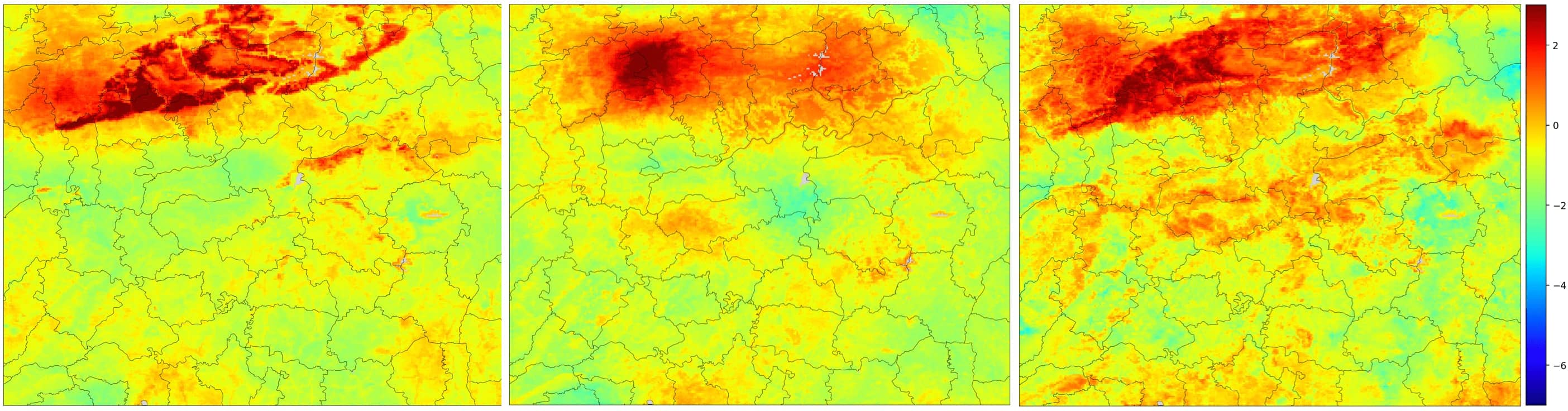}
\captionof{figure}{Predicted  $\hat{\lambda}^a_i$ on log scale for Nystr\"{o}m-Exp.}
\label{fig:gpmap_malaria}
\end{figure}
\FloatBarrier

\paragraph{NN-Exp}
We can see that the model is not very stable, this can be potentially due to the model does not have an inbuilt spatial smoothness function unlike other methods. It only uses manifold regularisation for training. Also, the maximum predicted pixel level intensity rate $\hat{\lambda}^a_i$ is over $1000$ in some cases, this is clearly physically impossible given $\lambda^a_i$ is rate per $1000$ people.
\begin{figure}[H]
\centering
\includegraphics[width=\linewidth]{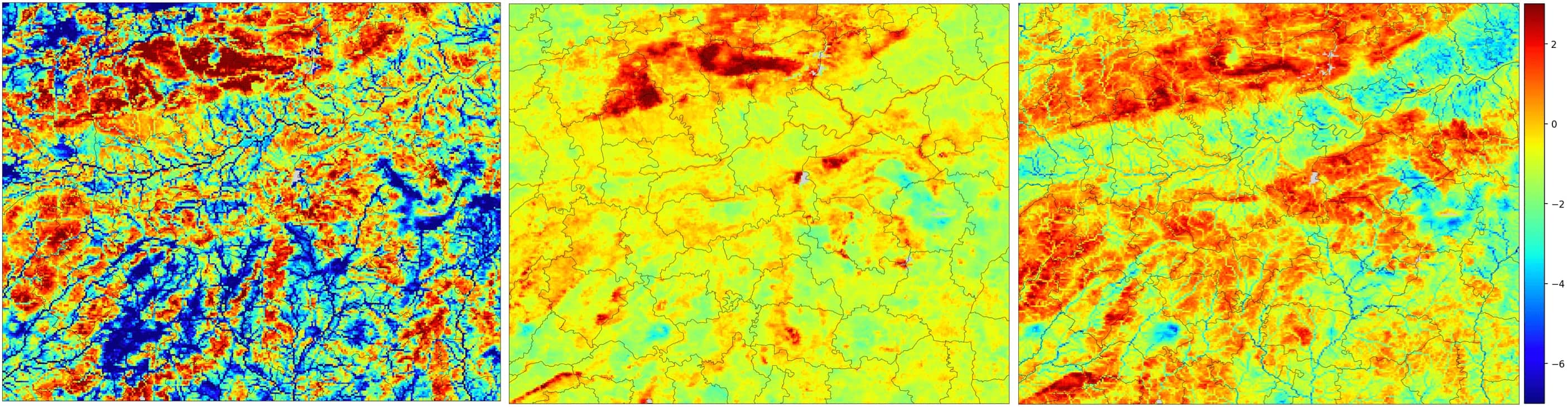}
\captionof{figure}{Predicted  $\hat{\lambda}^a_i$ on log scale for NN-Exp.}
\label{fig:nn_malaria}
\end{figure}

\subsection{Remote Sensing covariates that provide the existence of a river}
\FloatBarrier
\label{app:river}
Here, we provide figures for some covariates that give information that there is a river as indicated by the triangles in Figure \ref{fig:malaria}. 
\begin{figure}[H]
\centering
\includegraphics[width=0.7\linewidth]{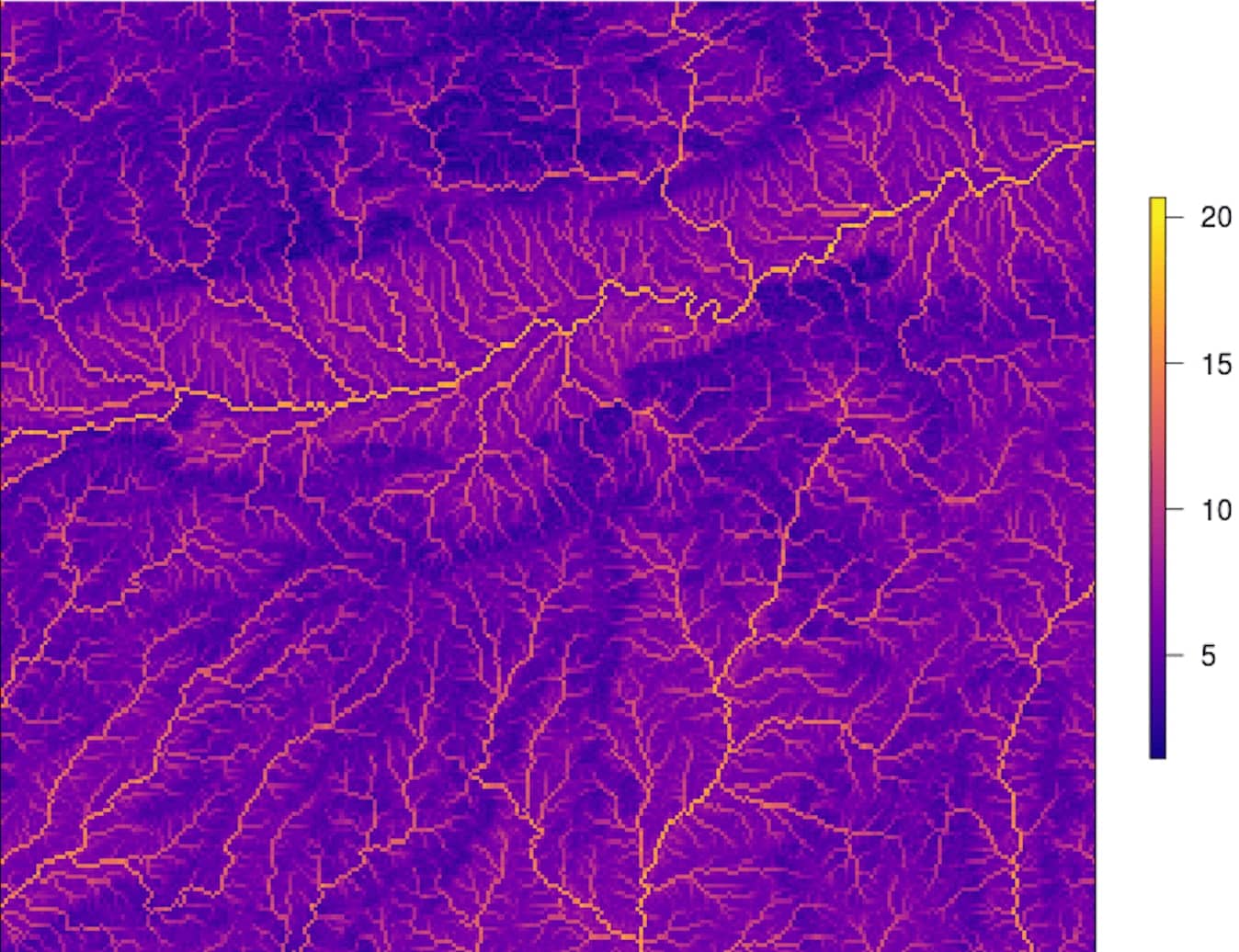}
\captionof{figure}{Topographic wetness index, measures the wetness of an area, rivers are wetter than others, as clearly highlighted.}
\label{fig:wetness}
\end{figure}
\begin{figure}[H]
\centering
\includegraphics[width=0.7\linewidth]{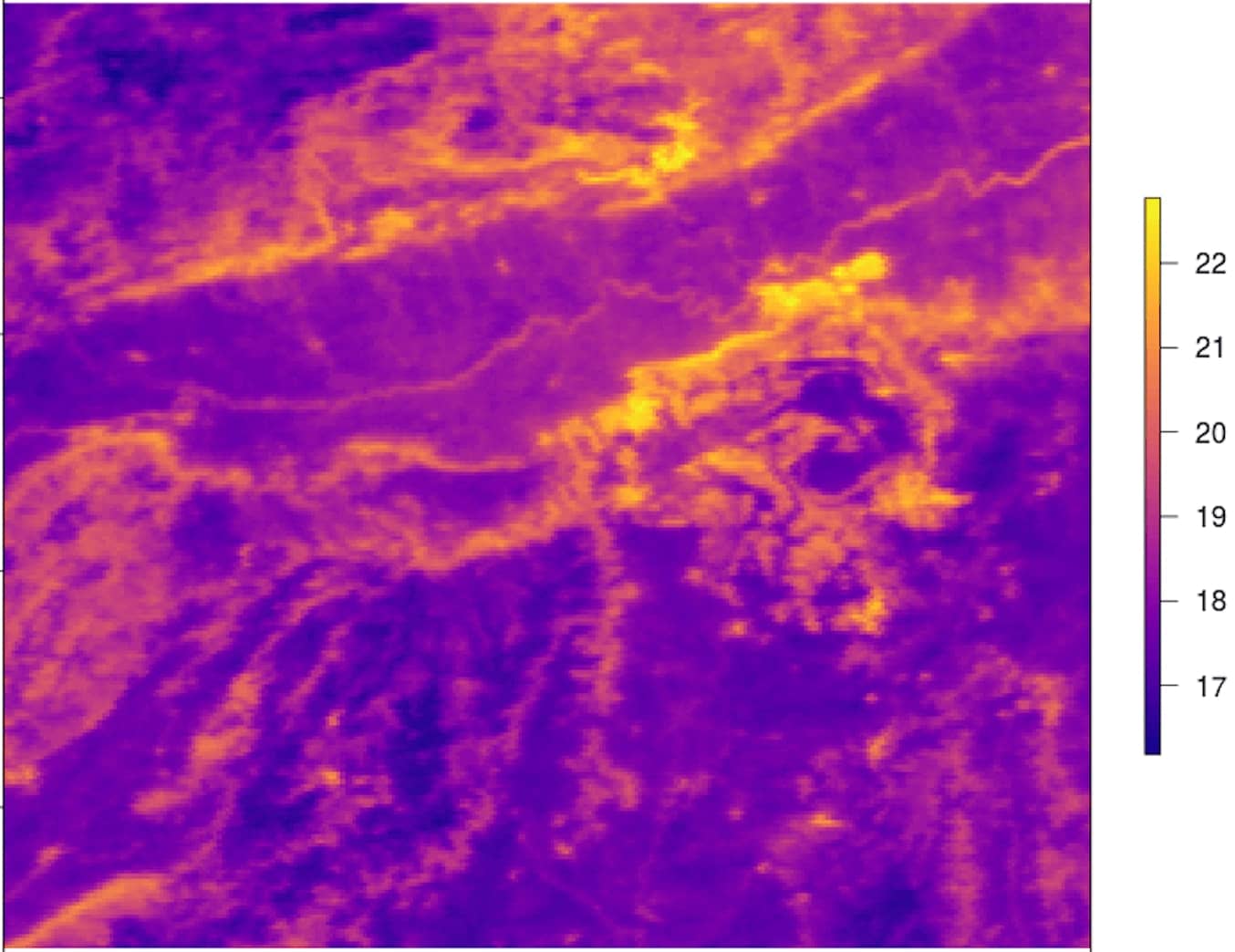}
\captionof{figure}{Land Surface Temperature at night, river is hotter at night, due to river being able to retain heat better.}
\label{fig:lst}
\end{figure}

\section{Additional Toy Experimental Results}
\label{app:experiments}
In this section, we provide additional experimental results for the Normal and Poisson model. In particular, we provide results on test bag level performance, and provide also prediction, calibration and uncertainty plots. 

For the VBAgg model, during the tuning process, it is possible to choose tuning parameters (e.g. learning rate, multiple-initialisations, landmark choices) based on NLL with an additional validation set or on the objective $\mathcal{L}_1$ on the training set. To compare the difference, we denote the model tuned on NLL as VBAgg and the model tuned on $\mathcal{L}_1$ as VBAgg-Obj. 
Intuitively, as VBAgg-Obj attempts to obtain as tight a bound to the marginal likelihood, we would expect better performance in calibration, i.e. more accurate uncertainties.

For calibration plots, we compute the $\alpha$ quantiles of the approximated posterior distribution and consider the ratio of times the underlying rate parameter $\lambda^a_i$ (or $\mu^a_i$ for the normal model) appear inside the quantiles of the posterior distribution. If the model provides good uncertainties/calibration, we should expect to see the quantiles to match with the observed ratio.

In the case of $\Psi(v)=v^2$, the approximated posterior distribution is simply a non-central $\chi^2$ distribution, while for $\Psi(v)=e^v$, this is a log-normal distribution. For the Normal Model, it is simply a normal distribution, as we do not have any transformations. Calibration plots can be found in Figure \ref{fig:normal_calibrate_varybag} and Figure \ref{fig:normal_calibrate_varyindiv} 
for the Normal Model, with Figure \ref{fig:poisson_calibrate_varybag} and Figure \ref{fig:poisson_calibrate_varyindiv} for the Poisson Model. 

For uncertainty plots, we plot the standard deviation of the posterior of $v \sim \mathcal{N}(m^a, S^a)$ (i.e. before transformation through $\Psi$), as this provides better interpretability. Uncertainty plots can be found in Figure \ref{fig:poisson_uncer} and \ref{fig:normal_uncer}

To demonstrate statistical significance of our result, we aggregate the repetitions in each experiment for each method and consider a one sided rank permutation test (Wilcoxon signed-rank test) to see whether VBAgg is statistically significant better than other approaches for individual NLL and MSE.

\subsection{Poisson Model}
\label{app:poisson}
\subsubsection{Swiss Roll Dataset}
\label{app:poisson_toy}
We provide additional results here for the experimental settings that we consider.

The varying number of bags experimental results is found in Figure \ref{fig:poisson_varybag}, with the corresponding table of p-values in Table \ref{tab:p_varybag_poisson_sq}, \ref{tab:p_varybag_poisson_exp} demonstrating statistical significance of the VBAgg-Exp and VBAgg-Sq method. Similarly, the varying number of individuals per bag through $N_{mean}$ experimental result can be found in Figure \ref{fig:poisson_varyindiv}, with the corresponding table of p-values in Table \ref{tab:p_varyindiv_poisson_sq}, \ref{tab:p_varyindiv_poisson_exp}. The comparison between VBAgg-Exp and VBAgg-Sq was found to be non-significant. 
\begin{figure}
\includegraphics[width=\linewidth]{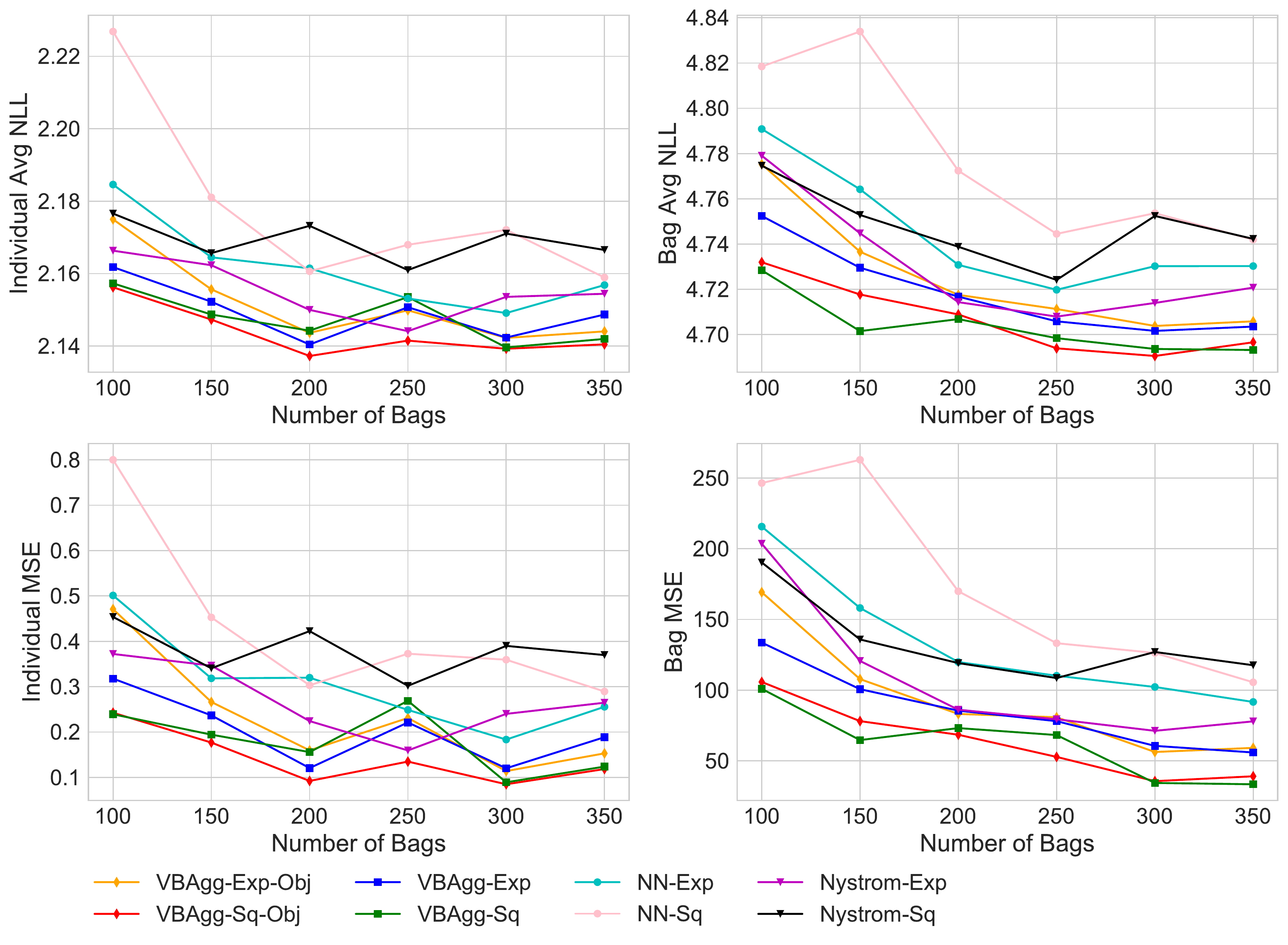}
\caption{Varying number of bags over $5$ repetitions.\textbf{Left Column:} Individual average NLL and MSE on train set. \textbf{Right Column:} Bag average NLL and MSE on test set (of size $500$). Constant prediction NLL and MSE is $2.23$ and $0.85$ respectively. bag-pixel model prediction NLL is above $2.4$ and MSE is above $3.0$, hence not shown on graph.}
\label{fig:poisson_varybag}
\end{figure}
\FloatBarrier
\begin{figure}
\includegraphics[width=\linewidth]{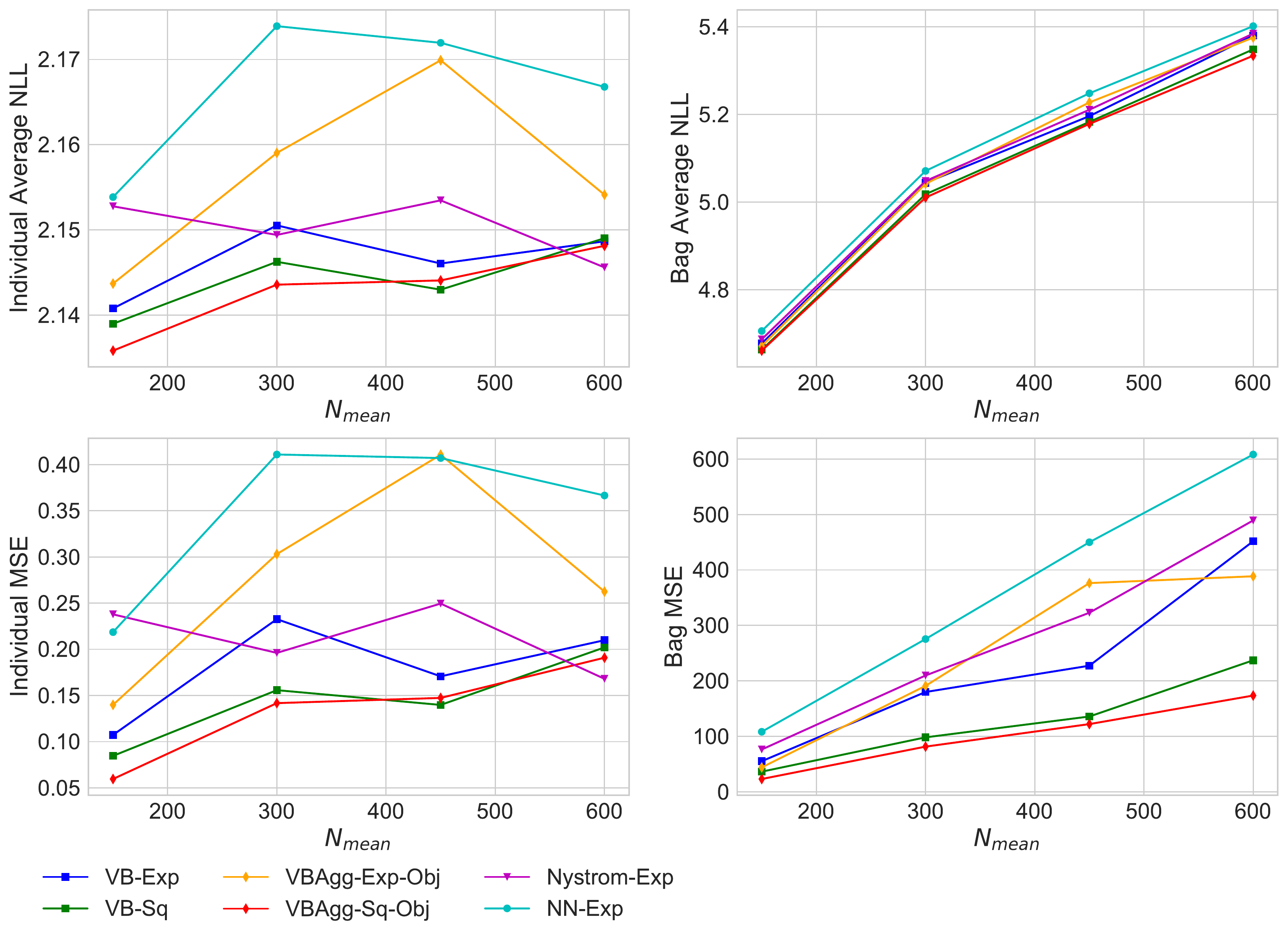}
\caption{Varying number of individuals per bag $N_{mean}$ over $5$ repetitions.\textbf{Left Column:} Individual average NLL and MSE on train set. \textbf{Right Column:} Bag average NLL and MSE on test set (of size $500$). Constant prediction NLL and MSE is $2.23$ and $0.85$ respectively.}
\label{fig:poisson_varyindiv}
\end{figure}
\begin{table}[H]
\centering
\caption{p-values from a Wilcoxon signed-rank test for VBAgg-Sq versus the methods below for the varying number of bags experiment for the Poisson model. The null hypothesis is VBAgg-Sq performs equal or worse than NN or  Nystr\"{o}m in terms of individual NLL or MSE on the train set.}
\label{tab:p_varybag_poisson_sq}
\begin{tabular}{lll}
        & NLL           & MSE           \\ \hline
NN-Exp      & $6.98\mathrm{e}{-06}$ & $0.00025$ \\
 Nystr\"{o}m-Exp & $0.00048$ & $0.00015$ \\ \hline
\end{tabular}
\end{table}
\begin{table}[H]
\centering
\caption{p-values from a Wilcoxon signed-rank test for VBAgg-Exp versus the methods below for the varying number of bags experiment for the Poisson model. The null hypothesis is VBAgg-Exp performs equal or worse than NN or  Nystr\"{o}m in terms of individual NLL or MSE on the train set.}
\label{tab:p_varybag_poisson_exp}
\begin{tabular}{lll}
        & NLL           & MSE           \\ \hline
NN-Exp      & $2.48\mathrm{e}{-06}$ & $2.48 \mathrm{e}{-05}$ \\
 Nystr\"{o}m-Exp & $0.0005$ & $0.00025$ \\ \hline
\end{tabular}
\end{table}

\begin{table}[H]
\centering
\caption{p-values from a Wilcoxon signed-rank test for VBAgg-Sq versus the methods below for the varying number of individuals per bag experiment for the Poisson model. The null hypothesis is VBAgg-Sq performs equal or worse than NN or  Nystr\"{o}m in terms of individual NLL or MSE on the train set.}
\label{tab:p_varyindiv_poisson_sq}
\begin{tabular}{lll}
        & NLL           & MSE           \\ \hline
NN-Exp      & $1.81\mathrm{e}{-05}$ & $9.53\mathrm{e}{-06}$ \\
 Nystr\"{o}m-Exp & $0.062$ & $0.041$ \\ \hline
\end{tabular}
\end{table}
\begin{table}[H]
\centering
\caption{p-values from a Wilcoxon signed-rank test for VBAgg-Exp versus the methods below for the varying number of individuals per bag experiment for the Poisson model. The null hypothesis is VBAgg-Exp performs worse than NN or  Nystr\"{o}m in terms of individual NLL or MSE on the train set.}
\label{tab:p_varyindiv_poisson_exp}
\begin{tabular}{lll}
        & NLL           & MSE           \\ \hline
NN-Exp      & $6.68\mathrm{e}{-05}$ & $0.00016$ \\
 Nystr\"{o}m-Exp & $0.049$ & $0.062$ \\ \hline
\end{tabular}
\end{table}
\paragraph{Calibration Plots for the Swiss Roll Dataset}
In Figure \ref{fig:poisson_calibrate_varybag} and \ref{fig:poisson_calibrate_varyindiv}, we provide calibration results for both experiments that we have considered. See top of Appendix \ref{app:experiments} for a further details. It is clear that while VBAgg-Sq-Obj and VBAgg-Sq provides good calibration in general, this is not the case for VBAgg-Exp-Obj and VBAgg-Exp. This is not surprising as the VBAgg-Exp methods uses an additional lower bound.
\FloatBarrier
\begin{figure}[h]
\includegraphics[width=\linewidth]{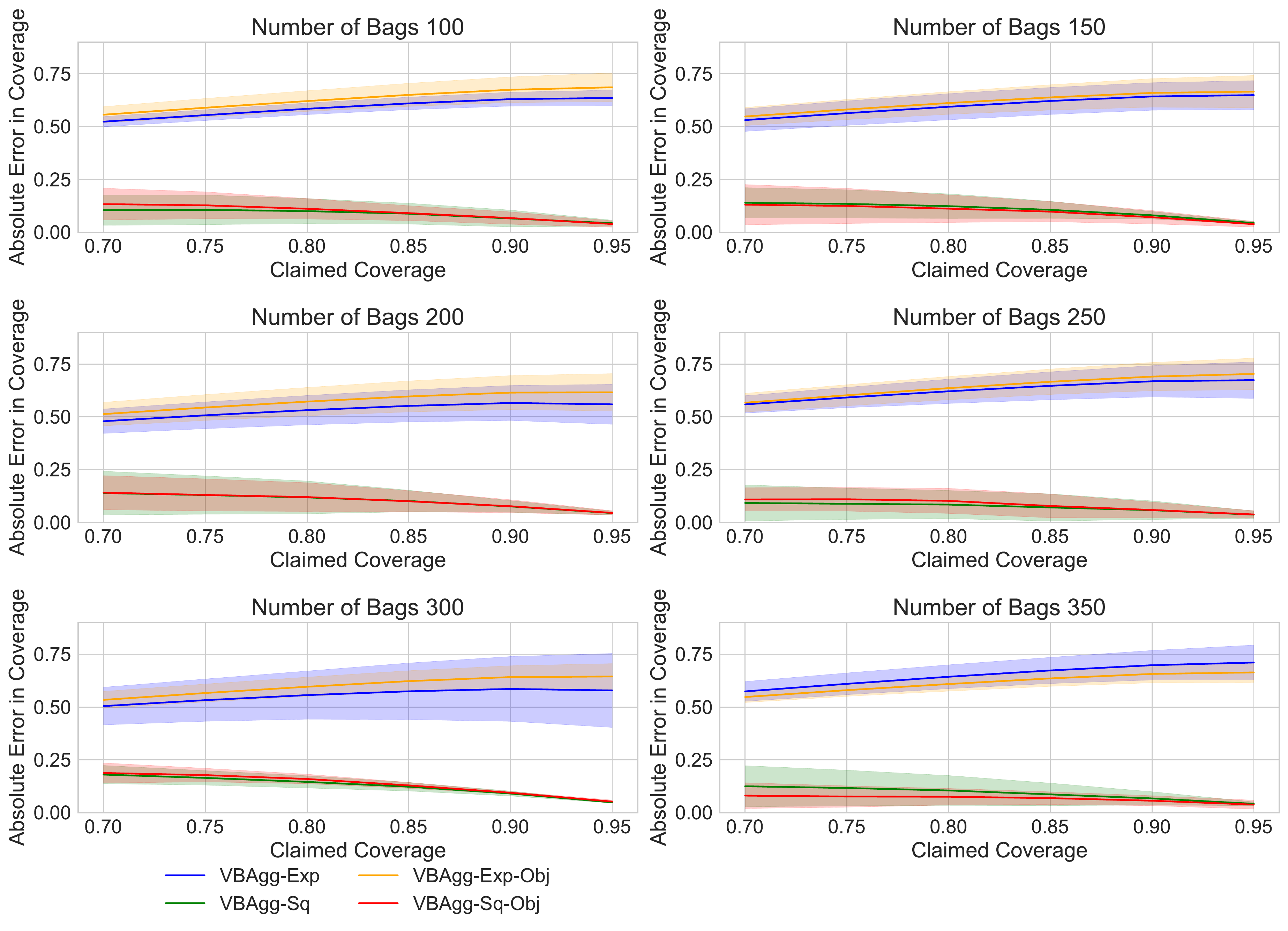}
\caption{Absolute Error in coverage from $70\%$ to $95\%$ for the increasing number of bags experiment for the Poisson Model. Shaded regions highlight the standard deviation. Perfect coverage would provide a straight line at $0$ error.}
\label{fig:poisson_calibrate_varybag}
\end{figure}
\begin{figure}[H]
\includegraphics[width=\linewidth]{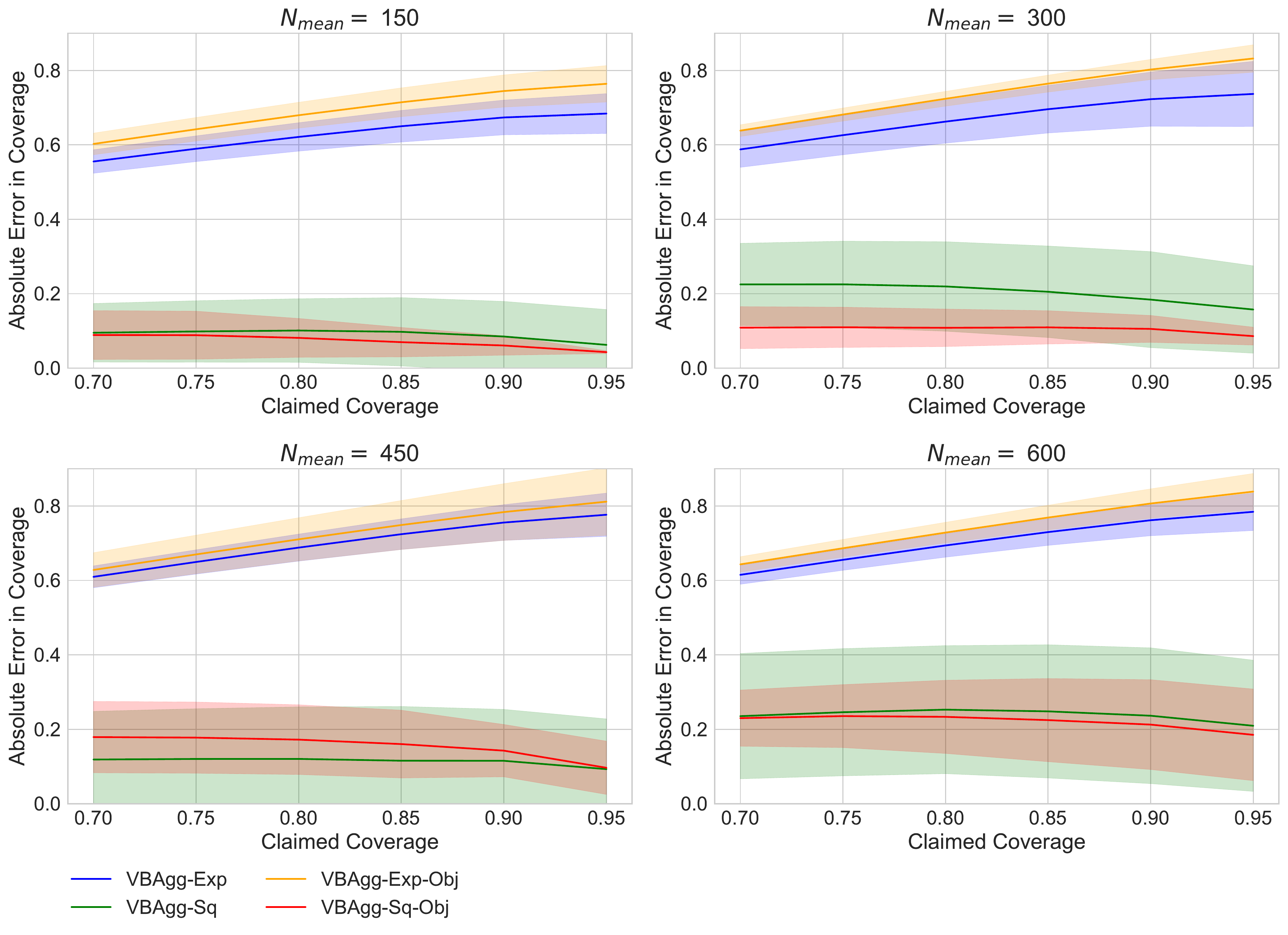}
\caption{Absolute Error in coverage from $70\%$ to $95\%$ for the increasing number of individuals per bag $N_{mean}$ and $N_{std}$ for the Poisson Model. Shaded regions highlight the standard deviation. Perfect coverage would provide a straight line at $0$ error.}
\label{fig:poisson_calibrate_varyindiv}
\end{figure}
\FloatBarrier
\paragraph{Prediction and uncertainty plots}
In Figure \ref{fig:poisson_pred_nn} and \ref{fig:poisson_uncer}, we provide some prediction plots for different models, and uncertainties for VBAgg models.

\begin{figure}[ht!]
\includegraphics[width=\linewidth]{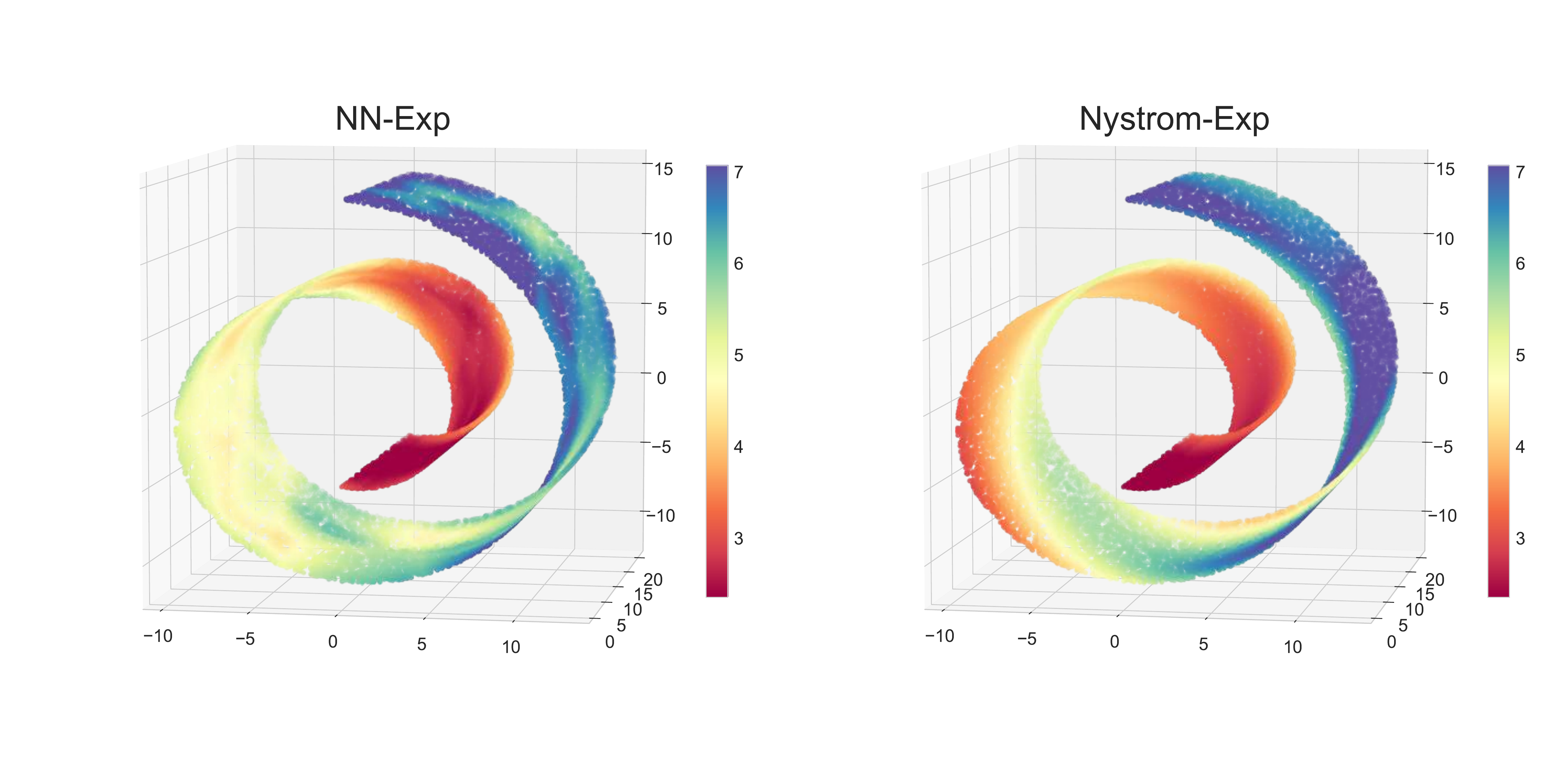}
\caption{Individual predictions on the train set for the swiss roll dataset with $150$ bags for NN and Nystr\"{o}m model. Here $N_{mean}=150$, with $N_{std}=50$.}
\label{fig:poisson_pred_nn}
\end{figure}
\begin{figure}[ht!]
\includegraphics[width=\linewidth]{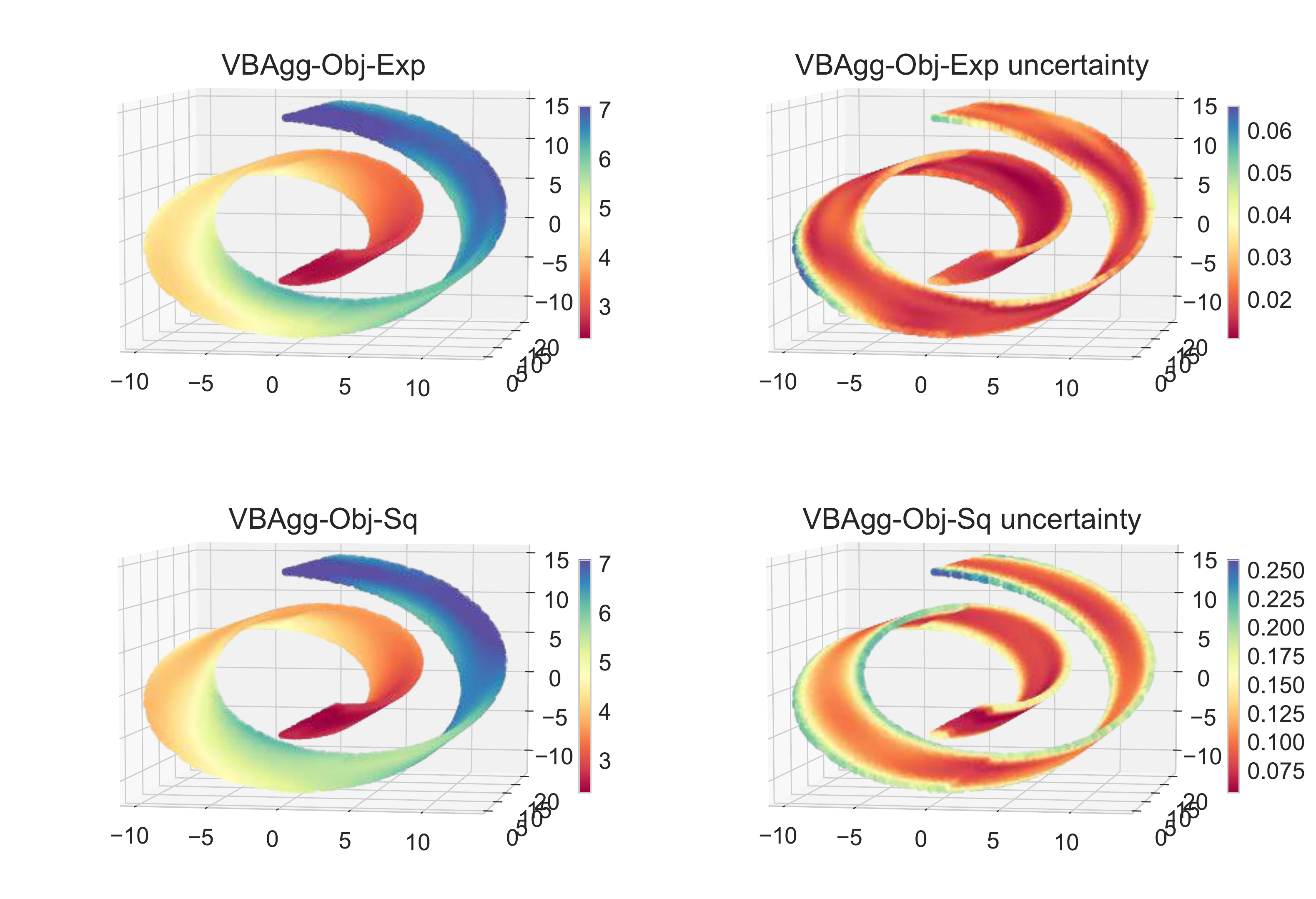}
\caption{Predictions and uncertainty on the swiss roll dataset with $150$ bags for the VBAgg-Obj models. Here $N_{mean}=150$, with $N_{std}=50$. For uncertainty, we plot the standard deviation of the posterior of $v$, coming from $v^a \sim \mathcal{N}(m^a, S^a)$ in (\ref{eqn:meanCov}).}
\label{fig:poisson_uncer}
\end{figure}
\FloatBarrier

\subsection{Normal Model}
\label{app:normal_exp}
\subsubsection{Swiss Roll Dataset}
In this section, we provide some experimental results for the Normal model, where throughout we assume $\tau^a_i = \tau$, same for all individuals. 

We consider the same swiss roll dataset as in the Poisson model, here the colour of each point to be the underlying mean $\mu^a_i$. We then consider $y^a_i \sim \mathcal{N}(\mu^a , \tau)$ with $\tau=0.1$, hence bag observations are given by $y^a = \sum_{i=1}^{N_a} y^a_i  \sim \mathcal{N}(\mu^a , N_a \tau)$ with $\mu^a=\sum_{i=1}^{N_a} \mu^a_i$. Here, the goal is to predict $\mu^a_i$ and $\tau$, given bag observations $y^a$ only. The results for the experiments are shown below in Figure \ref{fig:normal_varybag} and Figure \ref{fig:normal_varyindiv}, which shows the VBAgg outperforming the NN and Nystr\"om model. To show statistical significance, we also report 
 the corresponding table of p-values in Table \ref{tab:p_varybag_normal} and Table \ref{tab:p_varyindiv_normal}. Furthermore, we would also like to point out that the VBAgg is well calibrated as shown in Figure \ref{fig:normal_calibrate_varybag}.
\FloatBarrier
\begin{figure}[ht!]
\includegraphics[width=0.8\linewidth]{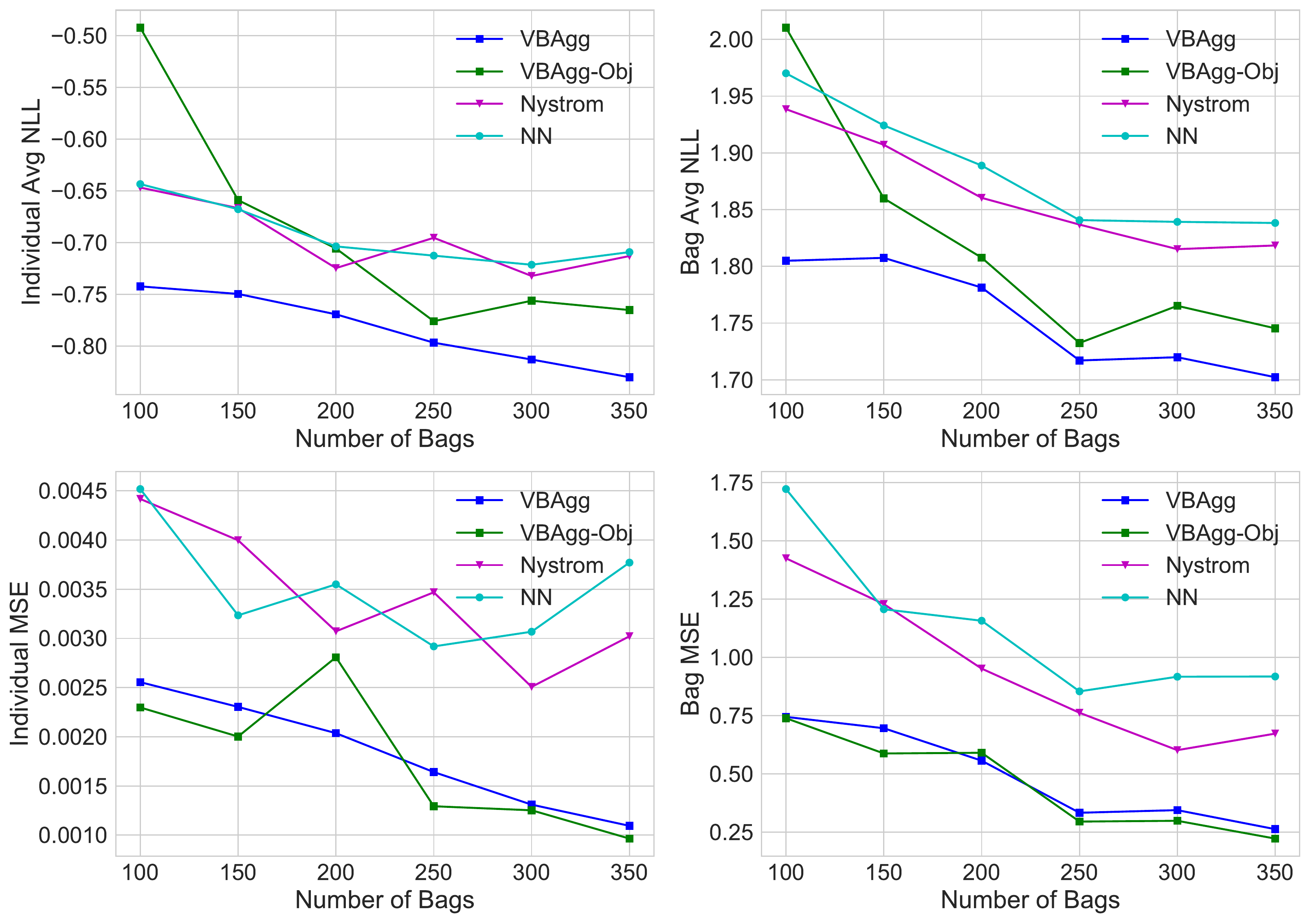}
\caption{Varying number of bags over $5$ repetitions for the Normal model.\textbf{Left Column:} Individual average NLL and MSE on train set. \textbf{Right Column:} Bag average NLL and MSE on test set (of size $500$). Constant model individual MSE is $0.04$.}
\label{fig:normal_varybag}
\end{figure}
\begin{table}[ht!]
\centering
\caption{p-values from a Wilcoxon signed-rank test for VBAgg versus the methods below for the varying number of bags experiment for the Normal model. The null hypothesis is VBAgg performs equal or worse than NN or  Nystr\"{o}m in terms of individual NLL or MSE on the train set.}
\label{tab:p_varybag_normal}
\begin{tabular}{lll}
        & NLL           & MSE           \\ \hline
NN      & $5.96\mathrm{e}{-07}$ & $4.79\mathrm{e}{-09}$ \\
 Nystr\"{o}m & $4.01\mathrm{e}{-08}$ & $6.52\mathrm{e}{-09}$ \\ \hline
\end{tabular}
\end{table}
\begin{figure}
\includegraphics[width=0.8\linewidth]{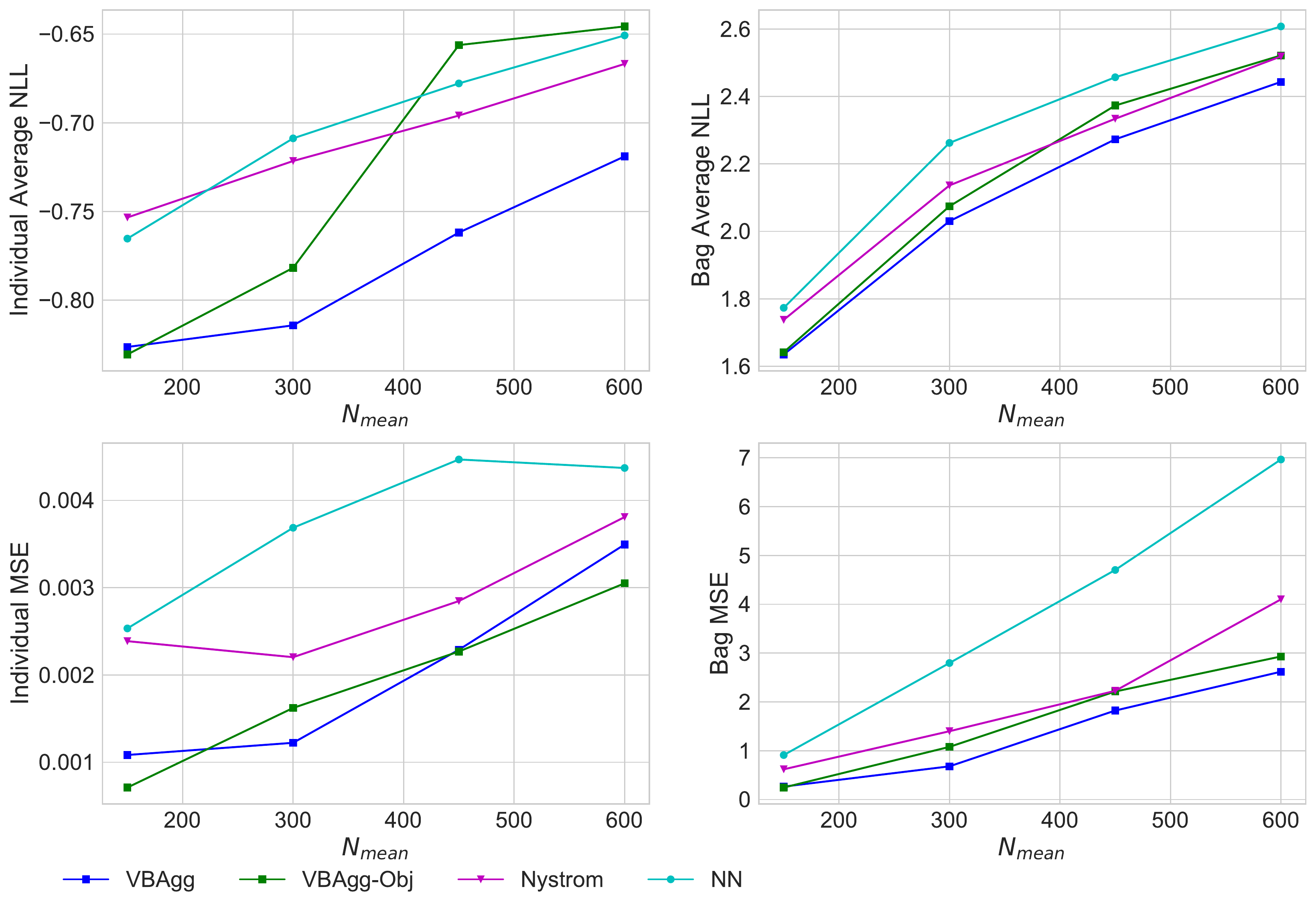}
\caption{Varying number of individuals per bag $N_{mean}$ over $5$ repetitions.\textbf{Left Column:} Individual average NLL and MSE on train set. \textbf{Right Column:} Bag average NLL and MSE on test set (of size $500$). Constant model individual MSE is $0.039$.}
\label{fig:normal_varyindiv}
\end{figure}
\begin{table}[ht!]
\centering
\caption{p-values from a Wilcoxon signed-rank test for VBAgg versus the methods below for the varying number of individuals per bag $N_{mean}$ experiment for the Normal nodel. The null hypothesis is VBAgg performs worse than NN or  Nystr\"{o}m in terms of individual NLL or MSE on the train set.}
\label{tab:p_varyindiv_normal}
\begin{tabular}{lll}
        & NLL           & MSE           \\ \hline
NN      & $4.77\mathrm{e}{-06}$ & $4.77\mathrm{e}{-06}$ \\
 Nystr\"{o}m & $4.77\mathrm{e}{-06}$ & $4.77\mathrm{e}{-06}$ \\ \hline
\end{tabular}
\end{table}
\paragraph{Calibration Plots for the Swiss Roll Dataset}
In Figure \ref{fig:normal_calibrate_varybag} and \ref{fig:normal_calibrate_varyindiv}, we provide calibration results for both experiments that we have considered. See top of Appendix \ref{app:experiments} for further details. It is clear that VBAgg-Obj has better calibration in general, this is not surprising as it is tuned based on the correct objective, rather than NLL.
\begin{figure}
\includegraphics[width=0.9\linewidth]{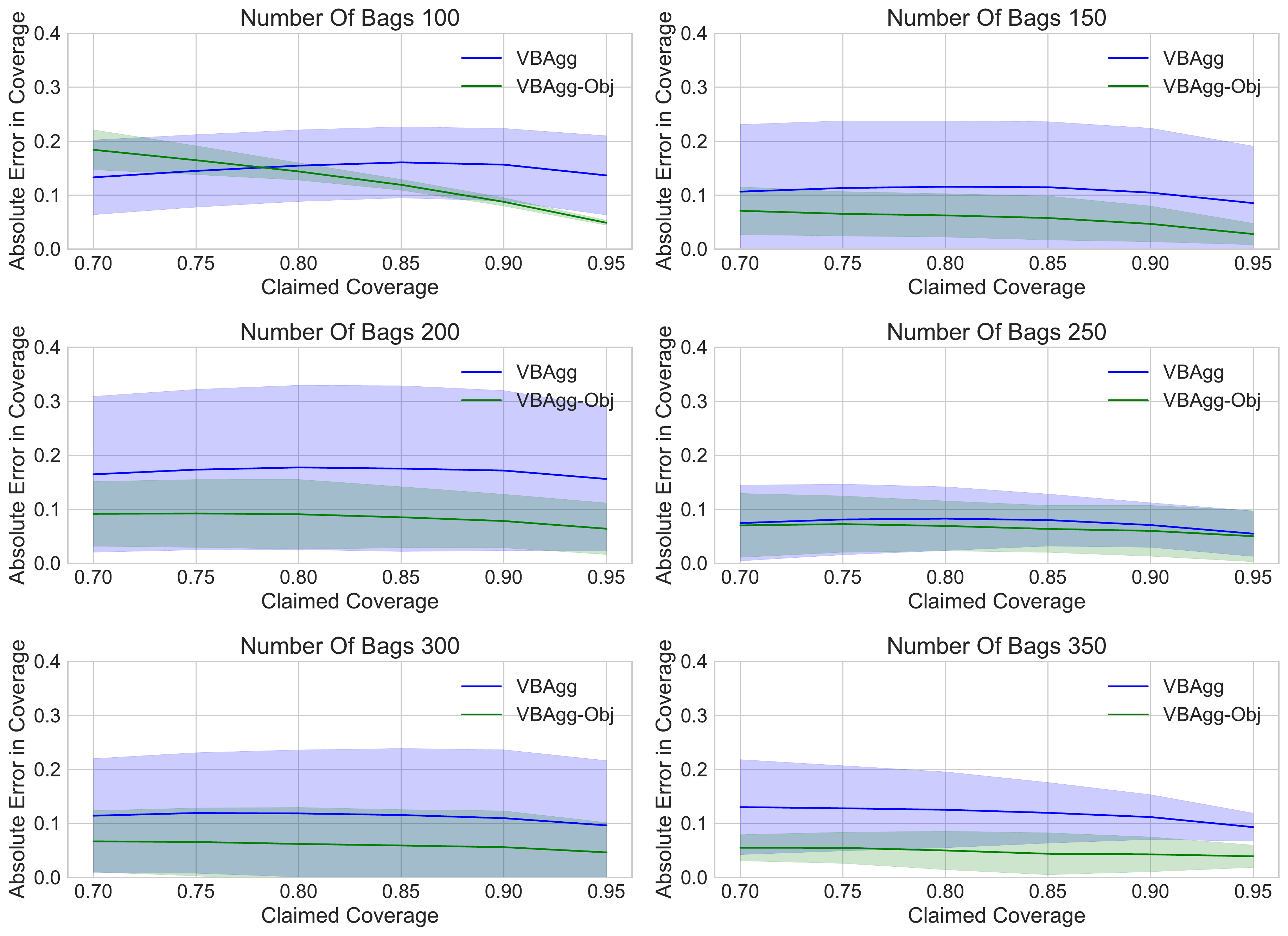}
\caption{Absolute Error in coverage from $70\%$ to $95\%$ for the increasing number of bags experiment for the Normal Model. Shaded regions highlight the standard deviation. Perfect coverage would provide a straight line at $0$ error.}
\label{fig:normal_calibrate_varybag}
\end{figure}
\begin{figure}
\includegraphics[width=0.9\linewidth]{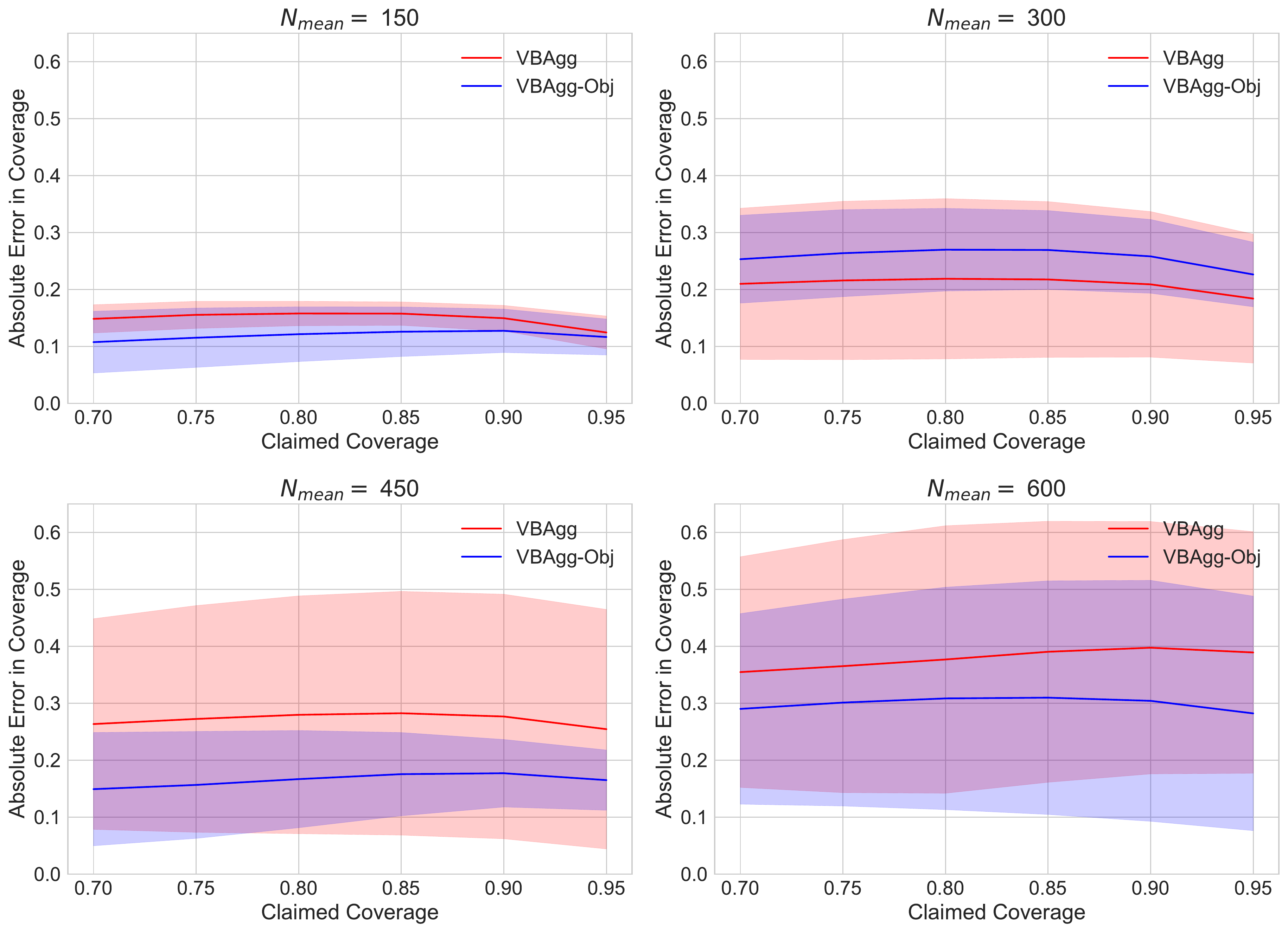}
\caption{Absolute Error in coverage from $70\%$ to $95\%$ for the increasing number of individuals per bag $N_{mean}$ and $N_{std}$ for the Normal Model. Shaded regions highlight the standard deviation. Perfect coverage would provide a straight line at $0$ error.}
\label{fig:normal_calibrate_varyindiv}
\end{figure}
\FloatBarrier
\paragraph{Prediction and uncertainty plots}
Here, we provide some prediction plots for different models.
\begin{figure}[h]
\includegraphics[width=0.7\linewidth]{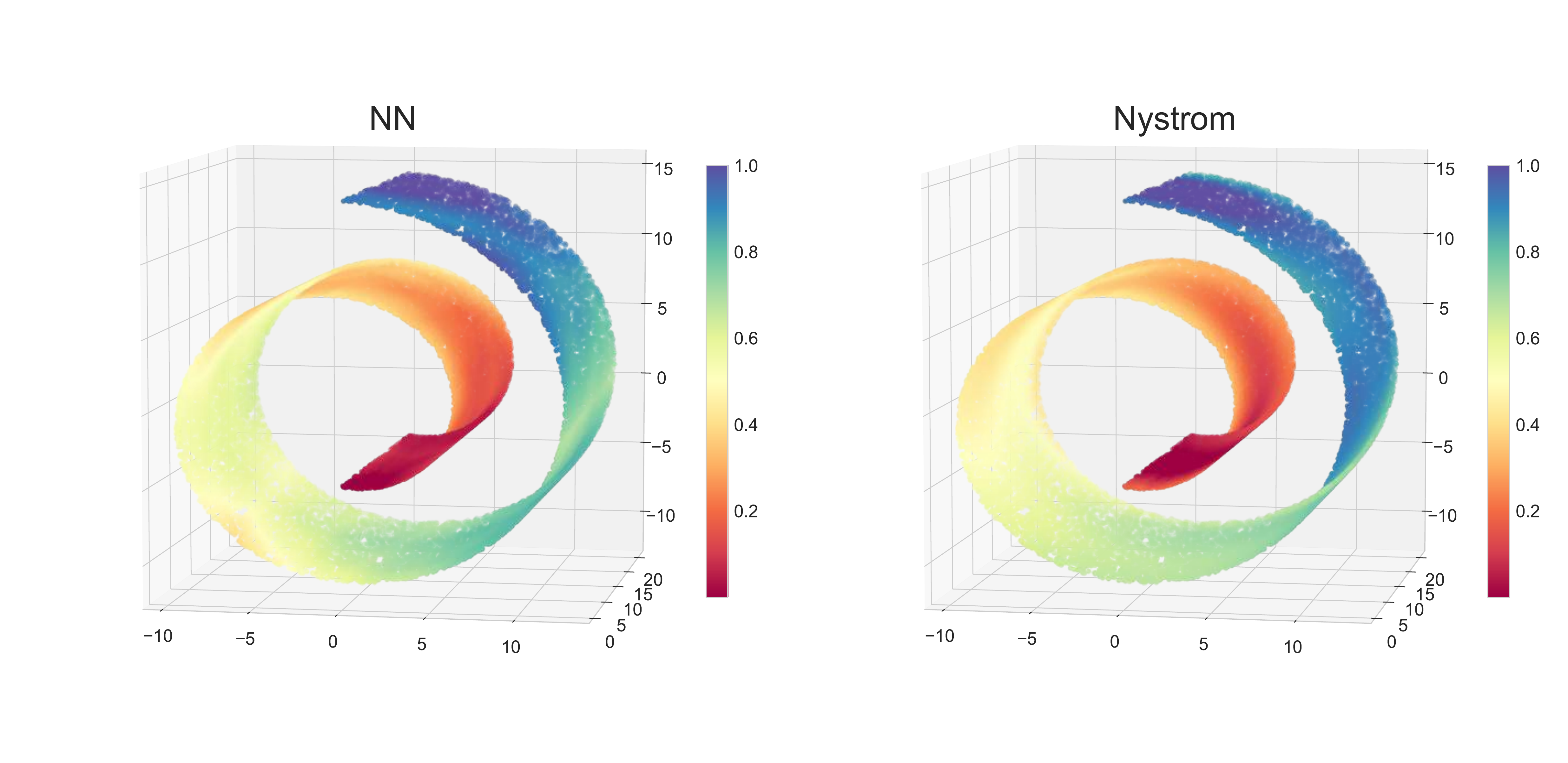}
\caption{Individual predictions on the train set for the swiss roll dataset with $150$ bags for NN and Nystr\"{o}m model. Here $N_{mean}=150$, with $N_{std}=50$.}
\label{fig:normal_prediction_nn}
\end{figure}
\begin{figure}[h]
\includegraphics[width=0.7\linewidth]{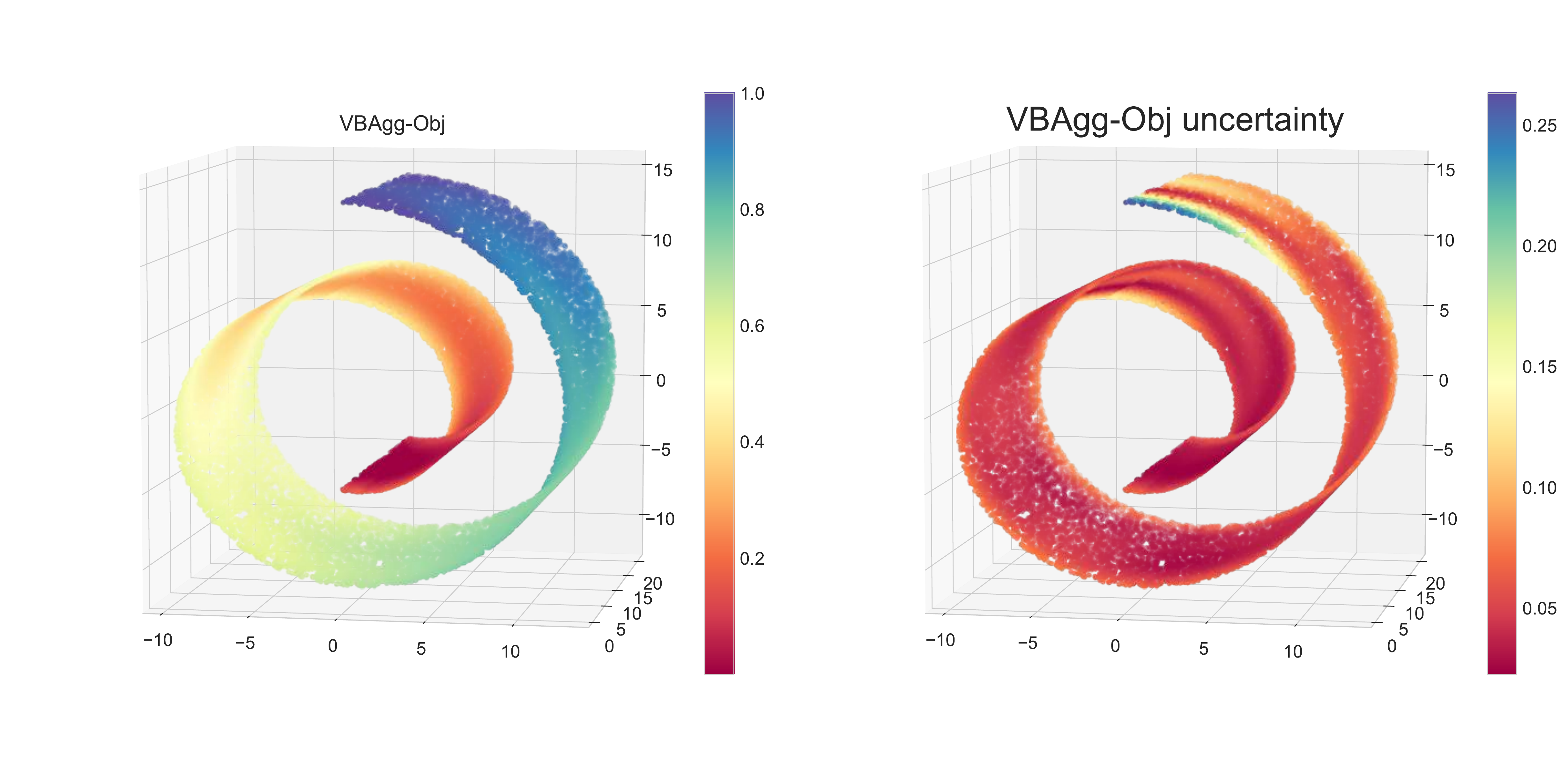}
\caption{Predictions and uncertainty on the swiss roll dataset with $150$ bags for the VBAgg-Obj model. Here $N_{mean}=150$, with $N_{std}=50$. For uncertainty, we plot the standard deviation of the posterior of $v$, coming from $v^a \sim \mathcal{N}(m^a, S^a)$ in (\ref{eqn:meanCov}).}
\label{fig:normal_uncer}
\end{figure}
\FloatBarrier
\subsubsection{Elevators Dataset}
For a real dataset experiment, we consider the elevators dataset\footnote{This dataset is publicly available at http://sci2s.ugr.es/keel/dataset.php?cod=94}, which is a large scale regression dataset\footnote{We have removed one column that is almost completely sparse.} containing $16599$ instances, with each instance $\in \mathbb{R}^{17}$. This dataset is obtained from the task of controlling F16 aircraft, with the label $y$ being a particular action taken on the elevators of the aircraft $\in \mathbb{R}$. For the model formulation we assume each label follows a normal distribution, i.e. $y_l \sim \mathcal{N}(\mu_l, \tau)$, where $\tau$ is a fixed quantity to be learnt. In practice, we can imagine the action taken may differ according to the operator. \\

In order formulate this dataset in an aggregate data setting, we sample bag sizes from a negative binomial distribution as before, with $N_{mean}=30$ and $N_{std}=15$, and also take $w^a_i = 1$. To place observations into bags, similar to the swiss roll dataset, we consider a particular covariate, and place instances into bags based on the ordering of the covariate. We now have the bag-level model given by $y^a \sim \mathcal{N}(\mu^a, N_a\tau)$, with individual model $y^a_i \sim \mathcal{N}(\mu^a_i, \tau)$ and it is our goal to predict $\mu^a_i$ (and also infer $\tau$), given only $y^a$. After the bagging process, we obtain approximately $225$ bags for training, and $33$ bags each for early stopping, validation and testing (for bag level performance). Further, in order to neglect variables that do not provide signal, we use an ARD kernel for the VBAgg and Nystr\"om model, as below:
\begin{equation}
\label{eqn:ard_k}
k_{ard}(x,y) = \gamma_{scale} \exp\left(-\frac{1}{2}\sum_{k=1}^d \frac{1}{\ell_k}(x_k - y_k)^2 \right)
\end{equation}
and learn kernel parameters $\gamma_{scale}$ and $\{\ell_k\}_{k=1}^d$. We repeat this process and splitting of the dataset $50$ times and report individual NLL results, and also MSE results in Table \ref{tab:elevator_full}. From the results, we observe that the VBAgg model performs better the Nystr\"om and NN model, with statistical significance.
\begin{table}[t]
\centering
\caption{Results for the Normal Model on the elevators dataset with $50$ repetitions. Indiv represents individuals on train set here, while bag performance is measured on a test set. Numbers in brackets denotes p-values from a Wilcoxon signed-rank test for VBAgg versus the method. The null hypothesis is VBAgg performs equal or worse than NN or  Nystr\"{o}m in terms of individual NLL or MSE on the train set. It is also noted MSE is computed on the observed $y^a_i$ or $y^a$, rather than the unknown $\mu^a_i$ or $\mu^a$.}
\label{tab:elevator_full}
\begin{tabular}{lllll}
          & Indiv NLL & Bag NLL & Indiv MSE & Bag MSE \\ \hline
Constant  & N/A            & N/A     & 0.010         & 0.366   \\
VBAgg     & -1.69   & 0.003   & 0.0018         & 0.052   \\
VBAgg-Obj & -1.71   & -0.02   & 0.0018         & 0.052   \\
Nystr\"{o}m   & $-1.57 (1.5\mathrm{e}{-13})$  & 0.003   & 0.0024 $(8.9\mathrm{e}{-16})$    & 0.041   \\
NN        & -1.64 (0.0001258) & 0.082   & 0.0021 $(8.8\mathrm{e}{-10})$         & 0.041   \\ \hline
\end{tabular}
\end{table}

\end{document}